\newtheorem*{theorem*}{Theorem}
\newtheorem*{definition*}{Definition}
\renewcommand\AB@affilsepx{\qquad \protect\Affilfont}
\renewcommand{\algorithmiccomment}[1]{\bgroup\hfill\footnotesize~#1\egroup}
\newcommand{\Clar}{F_L}
\newcommand{\Ccar}{F_C}
\newcommand{\xlar}{\bx_{L}}
\newcommand{\xcar}{\bx_{C}}
\newcommand{\xlaradv}{\bx^u_{L}}
\newcommand{\xcaradv}{\bx^u_{C}}
\newcommand{\xlaradvadv}{\bx^t_{L}}
\newcommand{\xcaradvadv}{\bx^t_{C}}
\newcommand{\algoneplus}{\textsc{Scar}}
\newcommand{\scar}{\textsc{Scar}}
\newcommand{\talgoneplus}{\textsc{T-Scar}}
\newcommand{\vscar}{\textsc{Vanilla-Scar}}
\newcommand{\pa}{\textsc{Pointwise}}
\newcommand{\simba}{\textsc{SimBA}}
\newcommand{\INPUT}{\item[{\bf Input:}]}
\DeclareMathOperator{\E}{\mathbb{E}}
\newcommand{\zeros}{{\mathbf 0}}
\DeclareMathOperator*{\argmax}{arg\,max}
\DeclareMathOperator*{\argmin}{arg\,min}
\newcommand{\U}{\mathcal{U}}
\newcommand{\bx}{\mathbf{x}}
\newcommand{\be}{\mathbf{e}}
\newcommand{\bg}{\mathbf{g}}
\newcommand{\bw}{\mathbf{w}}
\title{Adversarial Attacks on Binary Image  Recognition  Systems}
\author[1,2]{\qquad \quad \ \ Eric Balkanski} 
\author[1]{Harrison Chase}
\author[1]{Kojin Oshiba}
\author[1]{\newline\\Alexander Rilee}
\author[1,3]{Yaron Singer}
\author[1]{Richard Wang}
\affil[1]{Robust Intelligence}
\affil[2]{Columbia University}
\affil[3]{Harvard University}
\date{}
\begin{document}

\maketitle

\begin{abstract}
	We initiate the study of adversarial attacks on models for binary (i.e. black and white) image classification. Although there has been a great deal of work on attacking models for colored and grayscale images, little is known about attacks on models for binary images. Models trained to classify binary images are used in text recognition applications such as check processing, license plate recognition, invoice processing, and many others. In contrast to colored and grayscale images, the search space of  attacks on binary images is extremely restricted and noise cannot be hidden with minor perturbations in each pixel. Thus, the optimization landscape of attacks on binary images introduces new fundamental challenges.  
	
	In this paper we introduce a new attack algorithm called \algoneplus, designed to fool classifiers of binary images. We show that \algoneplus \ significantly outperforms existing $L_0$ attacks applied to the binary setting and use it to demonstrate the vulnerability of real-world text recognition systems. \algoneplus’s strong performance in practice contrasts with hardness results that show the existence of  classifiers that are provably robust to large perturbations. In many cases, altering a single pixel is sufficient to trick Tesseract, a popular open-source text recognition system, to misclassify a word as a different word in the English dictionary. We also license software from providers of check processing systems to most of the major US banks and  demonstrate the vulnerability of check recognitions for mobile deposits.  These systems are substantially harder to fool since they classify both the handwritten amounts in digits and letters, independently.  Nevertheless, we generalize \algoneplus \ to design attacks that fool state-of-the-art check processing systems using unnoticeable perturbations that lead to misclassification of deposit amounts.  Consequently, this is a powerful method to perform financial fraud. 
\end{abstract}

\newpage
\section{Introduction}

In this paper we study adversarial attacks on models designed to classify binary (i.e. black and white) images.  Models for binary image classification are heavily used across a variety of applications that include receipt processing, passport recognition,  check processing, and license plate recognition, just to name a few.  In such applications, the text recognition system typically binarizes the input image (e.g. check processing~\cite{check_binarization}, document extraction~\cite{document_extraction}) and trains a model to classify binary images.

In recent years there has been an overwhelming interest in understanding the vulnerabilities of AI systems.  In particular,  a great deal of work has designed attacks on image classification models (e.g. ~\cite{szegedy13,fgsm, deepfool, bim, lzeropapernot, pgd, lzerocarlini,  bbox1, bboxmadry, bboxmadry2,  bbox2, simba, nattack}).  Such attacks distort images in a manner that is virtually imperceptible to the human eye and yet cause state-of-the-art models to misclassify these images. Although there has been a great deal of work on attacking image classification models, these attacks are designed for colored and grayscale images. These attacks hide the noise in the distorted images by making minor perturbations in the color values of each pixel. 

Somewhat surprisingly, when it comes to binary images, the vulnerability of state-of-the-art models is poorly understood.  In contrast to colored and grayscale images, the search space of  attacks on binary images is extremely restricted and noise cannot be hidden with minor perturbations of color values in each pixel. As a result, existing attack algorithms on machine learning systems do not apply to binary inputs.  Since binary image classifiers are used in  high-stakes decision making and are heavily used in banking and other multi-billion dollar industries, the natural question is:

\begin{center}
	\emph{Are models for binary image classification used in industry vulnerable to adversarial attacks?}
\end{center}

In this paper we initiate the study of attacks on binary image classifiers. We develop an attack algorithm, called \scar, designed to fool binary image classifiers. \scar \ carefully selects pixels to flip to the opposite color in a query efficient manner, which is a central challenge when attacking black-box models. 

We first show that \scar \ outperforms existing attacks that we apply to the binary setting on multiple models trained over the MNIST and EMNIST datasets, as well as models for handwritten strings and printed word recognition.  The most relevant attacks for binary images are  $L_0$ attacks which minimize the number of pixels in which the distorted image differs from the original image \cite{lzeropapernot, lzerocarlini, pointwise, simba}.  We then use \scar \ to demonstrate the vulnerability of  text recognition systems used in industry. We fool commercial check processing systems used by most of the major US banks for mobile check deposits.  One major challenge in attacking these systems, whose software we licensed from providers, is that there are two independent classifiers, one for the amount written in words and one for the amount written in numbers, that must be fooled with the same wrong amount. Check fraud is a major concern for US banks,  accounting for $\$1.3$ billion in losses in 2018~\cite{checkreport}. Since check fraud occurs at large scale, we believe that the vulnerability of check processing systems to adversarial attacks raises a serious concern.

We also show that no attack can obtain reasonable guarantees on the number of pixel inversions needed to cause misclassification as there exist simple classifiers that are provably robust to even large perturbations. There exist classifiers for $d$-dimensional binary images such that every class contains some image that requires $\Omega(d)$ pixel inversions ($L_0$ distance) to change the label of that image and such that for every class,  a random image in that class requires $\Omega(\sqrt{d})$ pixel inversions in expectation.

\paragraph{Related work.} The study of adversarial attacks was initiated in the seminal work by 
\citet{szegedy13} that showed that models for image classification are  susceptible to minor perturbations in the input. There has since then been a long line of work developing attacks on colored and greyscale images.  Most relevant to us are $L_0$ attacks, which iteratively make minor perturbations in carefully chosen pixels to minimize the total number of pixels that have been modified \cite{lzeropapernot, lzerocarlini, pointwise, simba}. We compare our attack to two $L_0$ attacks that are applicable in the black-box binary setting \cite{pointwise, simba}. Another related area of research focuses on developing attacks that query the model as few times as possible \cite{bbox1, bboxmadry, bboxmadry2, simba, nattack, bbox2, al2019there}.  We discuss  below why most of these attacks  cannot be applied to the binary setting. There has been previous work on attacking  OCR systems \cite{ocr}, but the setting deals with grayscale images and white-box access to the model.

Attacks on colored and grayscale images employ continuous optimization techniques and are fundamentally different than attacks on binary images which, due to the binary nature of each pixel, employ combinatorial optimization approaches. Previous work has formulated adversarial attack settings as combinatorial optimization problems, but in drastically different settings. \citet{combi_dimakis} consider attacks on text classification for tasks such as sentiment analysis and fake news detection, which is a different domain than OCR. \citet{combi-colored} formulate $L_\infty$ attacks on colored image classification as a combinatorial optimization problem where the search space for the change in each pixel is $\{-\varepsilon, \varepsilon\}$ instead of $[-\varepsilon, \varepsilon]$.

Finally, we also note that binarization, i.e. transforming colored or grayscale images into black and white images, has been studied as a technique to improve the robustness of models, especially to $L_2$ and $L_\infty$ attacks \cite{pointwise, madry_binarization, ding2019sensitivity}.

\paragraph{Previous attacks are ineffective in the binary setting.} Previous attacks on classifiers for grayscale and colored images are not directly applicable to classifiers for binary images. These attacks iteratively cause small perturbations in the pixel values. Small changes in the pixel values of binary images are not possible since there are only two possible values.

One potential approach to extend previous attacks to the binary setting  is to relax the binary pixel values to be in the grayscale range and run an attack over this relaxed domain. The issue with this approach is that binary image classifiers only take as input binary images and small changes in the relaxed grayscale domain are lost when rounding the pixel values back to being binary. 

Another potential approach  is to increase the step size of an attack such that a small change in the pixel value of a grayscale or colored image instead causes the pixel value of a binary image to  flip. For gradient-based attacks, which cause minor perturbations in all pixels, this  causes a large number of pixel flips.  This approach is more relevant for $L_0$ attacks since they  perturb a smaller number of pixels. There are two $L_0$ attacks which can be applied to the binary setting with this approach. The first is \simba~\cite{simba}  which modifies a single pixel at each iteration and the second is \pa~ \cite{pointwise}, which first applies random salt and pepper noise until the image is misclassified and then greedily returns each modified pixels to its original color if the image remains misclassified.  However, even these $L_0$ attacks extended to the binary setting result in a large and visible number of pixel inversions, as shown in the experiments in Section~\ref{sec:experiments}.

\section{Problem Formulation}
\label{sec:problem}
\paragraph{Binary images.}
We consider binary images $\bx \in \{0, 1\}^{d}$, which are $d$-dimensional  images such that each pixel  is either black (value $0$) or white (value $1$). An $m$-class classifier $F$ maps $\bx$ to a probability distribution $F(\bx) \in [0, 1]^m$ where $F(\bx)_{i}$ corresponds to the confidence that image $\bx$ belongs to class $i$. The predicted label $y$ of $\bx$ is the class with the highest confidence, i.e., $y = \argmax_{i}  F(\bx)_{i}$.

\paragraph{OCR systems.} Optical Character Recognition (OCR) systems convert images of handwritten or printed text to strings of characters.  Typically, a preprocessing step of OCR systems  is to convert the input to a binary format. To formalize the problem of attacking OCR systems, we consider a classifier $F$ where the labels are strings of characters. Given a binary image $\bx$ with label $y$, we wish to produce an adversarial example $\bx'$ which is similar to $\bx$, but has a predicted label $y' \neq y$. For example, given an image $\bx$ of license plate \textsf{23FC6A}, our goal is to produce a similar image $\bx'$ that is recognized as a different  license plate number. We measure the similarity of an adversarial image $\bx'$ to the original image $\bx$ with a perceptibility metric $D_{\bx}(\bx')$. For binary images, a natural metric is the number of pixels where $\bx$ and $\bx'$ differ, which corresponds to the $L_{0}$ distance between the two images. Finding an adversarial example can thus be formulated as the following optimization problem:
\begin{align*}
	\label{eq:opt}
	&\min_{\substack{\bx' \in \{0, 1\}^{d}\\ \|\bx - \bx'\|_{0} \leq k}}  F(\bx')_{y}
\end{align*}
where $k$ is the maximum dissimilarity tolerated for adversarial image $\bx'$.
For targeted attacks with target label $y_{t}$, we instead maximize $F(\bx')_{y_{t}}$. Since there are at least $\binom{d}{k}$ feasible solutions for $\bx'$, which is exponential in $k$, this is a computationally hard problem.

\paragraph{Check processing systems.} A check processing system $F$ accepts as input a binary image $\bx$ of a check and outputs confidence scores $F(\bx)$ which represent the most likely amounts that the check is for.
Check processing systems are a special family of OCR systems that consist of two independent models that verify each other.  Models  $\Ccar$ and  $\Clar$  for Courtesy and Legal Amount Recognition (CAR and LAR) classify the amounts written in numbers and in words respectively. If the predicted labels of the two models do not match, the check is flagged. For example, if the CAR of a valid check  reads $100$, and the LAR  reads ``one hundred'', the two values match and the check is processed. The main  challenge with attacking check processing systems over an input $\bx$ is to craft an adversarial example $\bx'$ with the same target label for both  $\Ccar$ and $\Clar$. Returning to the previous example, a successful adversarial check image might have the CAR read $900$ and the LAR read ``nine hundred''. For this targeted attack, the  optimization problem is:
\begin{align*}
	\max_{\substack{\bx'  \in \{0, 1\}^{d},  y_{t} \neq y \\  ||\bx - \bx'||_0 \leq k}} \quad & \Ccar(\bx')_{y_{t}} + \Clar(\bx')_{y_{t}} \\
	\text{subject to}\quad & y_{t}  = \text{argmax}_{i} \Ccar(\bx')_{i}  =  \text{argmax}_{i} \Clar(\bx')_{i} 
\end{align*}
The attacker first needs to select a target amount $y_{t}$ different from the true amount $y$, and then attack  $\Ccar$ and $\Clar$ such that both misclassify $\bx'$ as amount $y_{t}$. Since check processing systems also flag checks for which the models have low confidence in their predictions, we want to maximize both the probabilities $\Ccar(\bx')_{y_{t}}$ and $\Clar(\bx')_{y_{t}}$. In order to have $\bx'$ look as similar to $\bx$ as possible, we also limit the number of modified pixels to be at most $k$. Check processing systems are configured such that $\Ccar$ and $\Clar$ only output the probabilities for a limited number of their most probable amounts. This limitation makes the task of selecting a target amount challenging, as aside from the true amount, the most probable amounts for each of $\Ccar$ and $\Clar$ may be disjoint sets.

\paragraph{Black-box access.} We assume that we do not have any information about the OCR model $F$  and can only observe its outputs, which we formalize with the score-based black-box setting where an attacker only has access to the output probability distributions of a model $F$ over queries $\bx'$.

\section{Existence of Provably Robust Classifiers for Binary Images}
\label{sec:robust}
Before presenting the attacks, we first show the existence of simple binary image classifiers that are provably robust to any attack that modifies a large, bounded, number of pixels. This implies that there is no attack that can obtain reasonable guarantees on the number of pixel inversions ($L_0$~distance) needed to cause misclassification.

We show such results for two different settings. First,  there exists an $m$-class linear classifier $F$ for $d$-dimensional binary images such that every class contains some image whose predicted label according to $F$ cannot be changed with $o(d)$ pixel flips, i.e., every class contains at least one image which requires a number of pixel flips that is linear in the total number of pixels to be attacked. The analysis uses a probabilistic argument and is deferred to the appendix.
\begin{restatable}{rThm}{thmone}
	There exists an $m$-class linear classifier $F$ for $d$-dimensional binary images s.t. for all classes $i$, there exists at least one binary image $\bx$ in $i$ that is robust to $d/4 - \sqrt{2d \log m}/2$ pixel changes, i.e., for all $\bx'$ s.t. $\|\bx - \bx'\|_0   \leq d/4 - \sqrt{2d \log m}/2$, $\argmax_j F(\bx')_j = i$. 
\end{restatable}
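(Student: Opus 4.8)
The plan is to use the probabilistic method: I would exhibit a distribution over linear classifiers together with one canonical candidate image per class, and then show that a single random draw makes all $m$ candidate images simultaneously robust with positive probability. Concretely, take the class scores to be linear, $F(\bx)_j = \langle \bw_j, \bx\rangle$, with predicted label $\argmax_j F(\bx)_j$ (composing with a softmax to land in $[0,1]^m$ leaves the argmax unchanged), and draw $\bw_1,\dots,\bw_m$ independently, each uniform over the \emph{balanced} sign vectors in $\{-1,+1\}^d$, i.e.\ those with exactly $d/2$ entries equal to $+1$. For class $i$ the candidate image is the one best aligned with $\bw_i$, namely $\bx^{(i)}_t = \mathbf{1}\{(\bw_i)_t = +1\}$, so that $F(\bx^{(i)})_i = \langle \bw_i,\bx^{(i)}\rangle = d/2$ exactly, while for $j\neq i$ the competitor score $R_{ij} := \langle\bw_j,\bx^{(i)}\rangle$ has mean $0$.

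Next I would record the sensitivity step that reduces robustness to a margin condition. Since each $\bw_j\in\{-1,+1\}^d$, flipping one pixel changes any score $\langle\bw_j,\bx\rangle$ by at most $1$ in absolute value, so flipping at most $k$ pixels changes the gap $F(\bx')_i - F(\bx')_j$ by at most $2k$ relative to its value at $\bx^{(i)}$. Hence $\bx^{(i)}$ is robust to $k$ flips (and is itself classified as $i$) as soon as $F(\bx^{(i)})_i - F(\bx^{(i)})_j > 2k$ for every $j\neq i$, i.e.\ as soon as $\max_{j\neq i} R_{ij} < d/2 - 2k$. Plugging in the target $k = d/4 - \sqrt{2d\log m}/2$ makes the required threshold exactly $\tau := d/2 - 2k = \sqrt{2d\log m}$.

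I would then close the argument with a tail bound plus a first-moment count. Define the bad event $B_i = \{\max_{j\neq i} R_{ij}\geq\tau\}$. Conditioned on $\bw_i$, each $R_{ij}$ is an affine image of a hypergeometric count (how many of $\bw_j$'s $d/2$ positive coordinates land in the support of $\bx^{(i)}$), so Hoeffding's inequality for sampling without replacement yields $\Pr[R_{ij}\geq\tau]\le\exp(-\tau^2/d) = \exp(-2\log m) = m^{-2}$. A union bound over the $m-1$ competitors gives $\Pr[B_i]\le (m-1)m^{-2}$, and summing over classes gives $\E\big[\#\{i: B_i\}\big] = \sum_i \Pr[B_i]\le m(m-1)m^{-2} = (m-1)/m < 1$. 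Because the number of bad classes is a nonnegative integer whose mean is below $1$, it is $0$ with positive probability; any classifier realizing that event has every $\bx^{(i)}$ robust to $k$ flips, which proves existence.

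The part I expect to be the real obstacle is pinning down the exact constant $\sqrt{2d\log m}$, and this is precisely what forces the balanced construction rather than i.i.d.\ Rademacher weights. With i.i.d.\ weights the number of positive entries of $\bw_i$ fluctuates, so the true-class score $F(\bx^{(i)})_i$ is itself random and can dip below $d/2$; since the aligned image already maximizes the class-$i$ score, a class with too few positive weights has \emph{no} robust image at all, and carrying this fluctuation through the first-moment bound costs a constant factor in the $\sqrt{d\log m}$ term. Fixing the true-class score to exactly $d/2$ via balanced vectors eliminates this and leaves only the clean maximum-of-sub-Gaussians tail. The two quantities that must be balanced are the per-competitor tail (which needs $\tau^2/d\ge 2\log m$, giving $m^{-2}$) against the $O(m^2)$ union/first-moment factor; they meet exactly at $\tau=\sqrt{2d\log m}$, which is what produces the stated bound (meaningful whenever $m < e^{d/8}$, so that $k>0$).
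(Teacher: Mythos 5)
Your proof is correct and takes essentially the same route as the paper's: a probabilistic-method argument in which the robust image for class $i$ is (the binary rendition of) its own random sign weight vector $\bw_i$, a concentration bound plus a union bound over the $O(m^2)$ class pairs yielding the margin, and a per-flip sensitivity bound on the score gap converting that margin into $d/4 - \sqrt{2d\log m}/2$ tolerated flips. The only technical difference is that the paper relabels pixel values to $\{-1,1\}$ and draws i.i.d.\ uniform weight vectors, so the true-class score $\bx_i^\intercal \bw^{\star}_i = d$ is deterministic and the competitors are controlled by a Chernoff bound on pairwise Hamming distances (gap sensitivity $4$ per flip), whereas you stay in $\{0,1\}^d$ and enforce balanced weights to pin the true-class score at $d/2$ (sensitivity $2$ per flip), paying with Hoeffding's without-replacement inequality (and implicitly requiring $d$ even); the fluctuation obstacle you describe as ``forcing'' the balanced construction is thus an artifact of working in $\{0,1\}^d$, which the paper's relabeling sidesteps while achieving the identical constant.
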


This robustness result holds for all $m$ classes, but only for the most robust image in each  class.  We
also show the existence of a classifier robust to attacks on an image drawn uniformly at random. There exists a $2$-class classifier such that for both classes, a uniformly random image in that class requires, in expectation, $\Omega(\sqrt{d})$ pixel flips to be attacked. The analysis relies on anti-concentration bounds and is deferred to the appendix as well. 

\begin{restatable}{rThm}{thmtwo}
	There exists a $2$-class linear classifier $F$ for $d$-dimensional binary images such that for both classes $i$, a uniformly random binary image $\bx$ in that class $i$  is robust to $\sqrt{d}/8$ pixel changes in expectation, i.e.
	$\E_{\bx \sim \U(i)}[\min_{\bx' : \argmax_j F(\bx')_j \neq i} \|\bx - \bx'\|_0] \geq  \sqrt{d}/8.$
\end{restatable}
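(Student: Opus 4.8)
The plan is to realize $F$ as the threshold (``majority'') classifier on the number of white pixels, reduce the robustness quantity to the mean absolute deviation of a Binomial sum, and then lower bound that deviation by an anti-concentration argument. First I would take $\bw = \ones$ and declare $\argmax_j F(\bx)_j = 1$ exactly when $\langle \bw, \bx\rangle = \sum_i x_i > d/2$ and $0$ otherwise; the decision boundary is the affine set $\{\sum_i x_i = d/2\}$, so $F$ is linear, and composing the affine score with a softmax turns it into a genuine distribution in $[0,1]^2$ without changing the argmax. Assuming $d$ even for cleanliness, write $s(\bx) = \sum_i x_i$ and $Y = s(\bx) - d/2$. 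A single pixel flip changes $s$ by exactly $\pm 1$, so the cheapest way to leave class $1$ (where $Y \ge 1$) is to flip $Y$ white pixels to black, and to leave class $0$ (where $Y \le 0$) is to flip $|Y|+1$ black pixels to white. Hence $\min_{\bx' : \argmax_j F(\bx')_j \neq i} \|\bx - \bx'\|_0 \ge |Y|$ for either class, and it suffices to lower bound $\EU[\bx \sim \U(i)]\,[|Y|]$.

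Next I would observe that under the uniform law on $\{0,1\}^d$ the sum $s$ is $\mathrm{Binomial}(d,1/2)$, so $Y$ is centered with $\E Y^2 = d/4$, and that $\U(i)$ is precisely this law conditioned on the sign of $Y$. The entire difficulty is to show that $|Y|$ is typically of order $\sqrt d$, i.e.\ that the Binomial does not pile up near its mean $d/2$; this is the promised anti-concentration step. I would get it from moments (equivalently, a Paley--Zygmund bound on $Y^2$): by log-convexity of $L^p$ norms with the interpolation $2 = \tfrac23\cdot 1 + \tfrac13\cdot 4$, one has $\E|Y| \ge (\E Y^2)^{3/2}/\sqrt{\E Y^4}$. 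The fourth central moment of $\mathrm{Binomial}(d,1/2)$ is $\E Y^4 = \tfrac{d(3d-2)}{16} \le \tfrac{3d^2}{16}$, so substituting the two moments yields the unconditional bound $\E|Y| \ge (d/4)^{3/2}/(\sqrt3\, d/4) = \sqrt d/(2\sqrt3)$.

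Finally I would transfer this to the per-class (conditional) expectations. By symmetry of $Y$ about $0$, the mass of $\{Y>0\}$ equals that of $\{Y<0\}$, so $\E[|Y|\,\mathbf 1[Y>0]] = \E[|Y|\,\mathbf 1[Y<0]] = \tfrac12\E|Y|$ (the event $\{Y=0\}$ contributes nothing to $|Y|$). For class $1$, $\Pr(Y>0)\le \tfrac12$ gives $\E[|Y|\mid Y>0] = \tfrac12\E|Y|/\Pr(Y>0) \ge \E|Y|$; for class $0$, the crude bound $\Pr(Y\le 0)\le 1$ already gives $\E[|Y|\mid Y\le 0] \ge \tfrac12\E|Y|$. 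Thus both per-class robustness values are at least $\tfrac12\cdot \sqrt d/(2\sqrt3) = \sqrt d/(4\sqrt3) > \sqrt d/8$, which is exactly the claimed bound.

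The step I expect to be most delicate is this conditioning. Class $i=0$ absorbs both the point mass at $Y=0$ and slightly more than half of the distribution, so it is the binding case, and one must check that losing a constant factor there still clears the target constant $1/8$ (it does, since $1/(4\sqrt3)\approx 0.144 > 0.125$, with room to spare). Handling odd $d$ and small-$d$ boundary effects by shifting the threshold to $\lceil d/2\rceil$ is routine and changes only lower-order terms; the anti-concentration moment bound is the genuine analytic content.
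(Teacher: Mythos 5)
Your proof is correct, and it follows the paper's skeleton only up to a point: like the paper, you use the majority-vote threshold classifier on the number of white pixels and reduce per-class robustness to the deviation $|Y| = \left|\|\bx\|_0 - d/2\right|$ of a $\mathrm{Binomial}(d,1/2)$ sum from its mean; but your key analytic lemma is genuinely different. The paper proves anti-concentration by bounding the largest point mass of the binomial via Stirling, $\binom{d}{d/2} \leq e2^d/(\pi\sqrt{d})$, concluding that with probability at least $1/2$ an image in either class satisfies $|Y| \geq \pi\sqrt{d}/(4e) \geq \sqrt{d}/4$, and then obtains the expectation bound $\sqrt{d}/8$ from this median-type statement. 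You instead lower-bound $\E|Y|$ directly by moment comparison (H\"older / Paley--Zygmund), $\E|Y| \geq (\E Y^2)^{3/2}/\sqrt{\E Y^4} \geq \sqrt{d}/(2\sqrt{3})$, using only the second and fourth central moments of the binomial. Your route avoids Stirling entirely, is self-contained, and generalizes immediately to any sum of independent bounded coordinates with comparable second and fourth moments; the paper's route buys a slightly stronger distributional guarantee (at least half the images of each class are $\Omega(\sqrt{d})$-robust, not merely robust in expectation). You are also more careful than the paper about the conditioning step: the paper manipulates the per-class distribution somewhat informally, whereas you explicitly split $\E|Y|$ by symmetry, use $\Pr(Y>0) \leq 1/2$ for one class and $\Pr(Y \leq 0) \leq 1$ for the other, and verify that the worst-case constant $1/(4\sqrt{3}) \approx 0.144$ still clears the target $1/8$. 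Both arguments are valid and both establish the stated bound.
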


These hardness results hold for worst-case classifiers. Experimental results in Section~\ref{sec:experiments} show that, in practice, classifiers for binary images  are highly vulnerable and that the algorithms that we present next require a small number of pixel flips to cause misclassification.

\section{Attacking Binary Images}
In this section, we present \scar, our main attack algorithm. We begin by describing a simplified version of \scar, Algorithm~\ref{alg:one},  then discuss the issues of hiding noise in binary images and optimizing the number of queries, and finally describe \scar.  At each iteration, Algorithm~\ref{alg:one} finds the pixel $p$ in input image $\bx$ such that flipping $x_{p}$ to the opposite color causes the largest decrease in $F(\bx')_y$, which is the confidence  that this perturbed input $\bx'$ is classified as the true label $y$. It flips this pixel and repeats this process until either the perturbed input is classified as label $y' \neq y$ or the maximum $L_0$ distance $k$ with the original image is reached. Because binary images $\bx$ are such that $\bx \in \{0,1\}^d,$ we implicitly work in $\mathbb{Z}_2^d$. In particular, with $\be_1, \hdots, \be_d$ as the standard basis vectors, $\bx’ + \be_{p}$ represents the image $\bx'$ with pixel $p$ flipped.
\vspace{-.1cm}
\begin{algorithm}[H]
	\caption{A combinatorial attack on OCR systems.}
	\begin{algorithmic}
		\INPUT  model $F$, image $\bx$, label $y$
		\STATE $ \bx' \leftarrow \bx$
		\STATE \textbf{while} $y = \argmax_{i} F(\bx')_{i}$ and $\|\bx' - \bx\|_0 \leq k$ \textbf{do}
		\STATE \qquad $p' \leftarrow \argmin_{p}  F(\bx’ + \be_{p})_y$
		\STATE \qquad $\bx' \leftarrow \bx’ + \be_{p'}$ \\
		\textbf{return} $\bx'$ 
	\end{algorithmic}
	\label{alg:one}
\end{algorithm}
\vspace{-.1cm}

Although the adversarial images produced by Algorithm~\ref{alg:one} successfully fool models and have small $L_{0}$ distance to the original image, it suffers in two aspects: the noise added to the inputs is visible to the human eye, and the required number of queries to the model  is large.

\paragraph{Hiding the noise.} Attacks on images in a binary domain are fundamentally different from attacks on colored or grayscale images. In the latter two cases, the noise is often imperceptible because the 
change to any individual pixel is small relative to the range of possible colors. Since attacks on binary images can only invert a pixel's color or leave it untouched,  noisy pixels are highly visible if their colors contrast with that of their neighboring pixels. This is a shortcoming of Algorithm~\ref{alg:one}, which results in noise with small $L_{0}$ distance but that is highly visible (for example, see Figure~\ref{fig:examples}). To address this issue, we impose a new constraint which only allows modifying pixels on the \textit{boundary} of black and white regions in the image. A pixel is on a boundary if it is white and at least one of its eight neighboring pixels is black (or vice-versa). Adversarial examples produced under this constraint have a greater $L_{0}$ distance to their original images, but the noise is significantly less noticeable.

\paragraph{Optimizing the number of queries.} An attack may be computationally expensive if it requires many queries to a black-box model. For paid services where a model is hidden behind an API, running attacks can be financially costly as well. Several works have proposed techniques to reduce the number of queries. Many of these are based on gradient estimation \cite{bbox1, bbox2, bboxmadry, bboxmadry2, al2019there}. Recently, several gradient-free black-box attacks have also been proposed. \citet{nattack} and \citet{combi-colored} propose two such approaches, but these rely on taking small steps of size $\varepsilon$ in a direction which modifies \emph{all} pixels.  Taking small steps in some direction, whether with gradient estimation or gradient-free approaches, is not possible here  since there are only two possible values for each pixel. \simba  \cite{simba}, another gradient-free attack, can be extended to the binary setting and is evaluated in the context of binary images in Section~\ref{sec:experiments}. We propose two optimization techniques to exploit correlations between pixels both spatially and temporally. We define the \textit{gain} from flipping pixel $p$ at point $\bx'$ as the following discrete derivative of $F$ in the direction of $p$:
\[F(\bx')_{y} - F(\bx'+ \be_{p})_{y}\]
We say a pixel $p$ has \textit{large gain} if this value is larger than a threshold $\tau$.
\begin{itemize}
	\item \textbf{Spatial correlations.}  Pixels in the same spatial regions are likely to have similar discrete derivatives (e.g. Figure~4 in appendix). At every iteration, we prioritize evaluating the gains of the eight pixels $N(p)$ neighboring the pixel $p$ which was modified in the previous iteration of the algorithm. If one of these pixels has large gain, then we flip it and proceed to the next iteration without evaluating the remaining pixels.
	
	\item \textbf{Temporal correlations.} . Pixels with large discrete
	derivatives at one iteration are  likely to also have large
	discrete derivatives in the next iteration  (e.g. Figure~5 in appendix). At each iteration, we first consider pixels that had large gain in the previous iteration. If one of these pixels still produces large gain in the current iteration, we flip it and proceed to the next iteration without evaluating the remaining pixels. 
\end{itemize}

\paragraph{\algoneplus.} In order to improve on the number of queries, \scar \ (Algorithm~\ref{alg:full}) prioritizes evaluating the discrete derivatives at pixels which are expected to have large gain according to the spatial and temporal correlations. If one of these pixels has large gain, then it is flipped and the remaining pixels are not evaluated. If none of these pixels have large gain, we then consider all pixels on the boundary $B(\bx)$ of black and white regions in the image $\bx$. In this set, the pixel with the largest gain is flipped regardless of whether it has gain greater than $\tau$. As before, we denote the standard basis vector in the direction of coordinate $i$ with $\be_i$. We keep track of the gain of each pixel with vector $\bg$.

\begin{algorithm}[H]
	\caption{\algoneplus, Shaded Combinatorial Attack on Recognition sytems.}
	\begin{algorithmic}
		\INPUT  model $F$, image $\bx$, label $y$, threshold $\tau$, budget $k$
		\STATE $ \bx' \leftarrow \bx, \bg \leftarrow \zeros$
		\STATE \textbf{while}  $y = \argmax_{i} F(\bx')_{i}$ and $\|\bx' - \bx\|_0 \leq k$ \textbf{do}
		\STATE \qquad \textbf{for} $p: g_p \geq \tau \text{ or } p \in N(p')$ \textbf{do}
		\STATE \qquad\qquad $g_p \leftarrow F(\bx')_y - F(\bx' + \be_p)_y$
		\STATE \qquad \textbf{if} $\max_{p} g_p < \tau$ \textbf{then} 
		\STATE \qquad\qquad \textbf{for} $p \in B(\bx')$ \textbf{do}
		\STATE \qquad\qquad\qquad $g_p \leftarrow F(\bx')_y - F(\bx' + \be_p)_y$
		\STATE \qquad $p' \leftarrow \argmax_{p} g_p$
		\STATE \qquad $\bx' \leftarrow \bx' + \be_{p'}$ \\
		\textbf{return} $\bx'$ 
	\end{algorithmic}
	\label{alg:full}
\end{algorithm}

Algorithm~\ref{alg:full} is an untargeted attack which finds $\bx'$ which is classified as label $y' \neq y$ by $F$. It can easily be modified into a targeted attack with target label $y_{t}$ by changing the first condition in the while loop from $y = \argmax_{i} F(\bx')_{i}$ to $y_t \neq \argmax_{i} F(\bx')_{i}$  and by computing the gains $g_p$  as $F(\bx + \be_{p})_{y_t} - F(\bx)_{y_t}$ instead of $F(\bx)_y - F(\bx + \be_{p})_y$. Even though \algoneplus \ performs well in practice, we show in the appendix that there exists simple classifiers $F$ for which \algoneplus  \ requires, for most images $\bx$, a linear number $k = O(d)$ of  pixel flips to find an adversarial example $\bx'$.

\section{Simultaneous Attacks}
There are two significant challenges to attacking check processing systems. In the previous section, we discussed the challenge caused by the preprocessing step that binarizes check images~\cite{check_binarization}. The second challenge is that check processing systems employ two independent models that verify the output of the other model: $\Ccar$ classifies the amount written in numbers, and $\Clar$ classifies the amount written in letters. We thus propose an algorithm which tackles the problem of attacking two separate OCR systems simultaneously.  

A natural approach is to search for a target amount at the intersection of what $\Ccar$ and $\Clar$ determines are probable amounts. However, on unmodified checks, the models are often highly confident of the true amount, and other amounts have extremely small probability, or do not even appear at all as predictions by the models. 

To increase the likelihood of choosing a target amount which will result in an adversarial example, we first proceed with an untargeted attack on both $\Ccar$ and $\Clar$ using \scar, which returns image $\bx^u$ with reduced confidence in the true amount $y$. Then we choose the target amount $y_t$ to be the amount $i$ with the maximum value $\min(\Ccar(\bx^u)_{i}, \Clar(\bx^u)_{i})$, since our goal is to attack both $\Ccar$ and $\Clar$. Then we run \talgoneplus, which is the targeted version of \scar, twice to perform targeted attacks on both $\Ccar$ and $\Clar$ over image $\bx^u$.

\begin{algorithm}[H]
	\caption{The attack on check processing systems.}
	\begin{algorithmic}
		\INPUT check image $\bx$, models $\Ccar$ and $\Clar$, label $y$
		\STATE  $\xcar, \xlar \leftarrow$ extract CAR and LAR regions of $\bx$
		\STATE $\xcaradv, \xlaradv  \leftarrow \algoneplus(\Ccar, \xcar), \algoneplus(\Clar, \xlar)$
		\STATE $y_t \leftarrow \max_{i \neq y} \min(\Ccar(\xcaradv)_{i}, \Clar(\xlaradv)_{i})$
		\STATE $\xcaradvadv, \xlaradvadv \leftarrow \talgoneplus(\Ccar, \xcaradv, y_t), \talgoneplus(\Clar, \xlaradv, y_t)$ 
		\STATE $\bx^t \leftarrow$ replace CAR, LAR regions of $\bx$ with $\xcaradvadv, \xlaradvadv$
		\STATE \textbf{return} $\bx^t$
	\end{algorithmic}
	\label{alg:check}
\end{algorithm}

\section{Experiments} \label{sec:experiments}
We demonstrate the effectiveness of \textsc{Scar} for attacking text recognition systems.  We  attack, in increasing order of model complexity, standard models  for single handwritten character classification (Section~\ref{sec:mnist}),   an LSTM model for handwritten numbers classification (Section~\ref{sec:lstm}),  a widely used open source model for typed text recognition called Tesseract (Section~\ref{sec:tesseract}), and finally commercial check processing systems used by banks for mobile check deposit (Section~\ref{sec:checkresults}).

\subsection{Experimental setup}

\paragraph{Benchmarks.} We compare four attack algorithms.

\begin{itemize}
	\item \textbf{\scar}, which is  Algorithm~\ref{alg:full} with threshold $\tau = 0.1$.
	\item \textbf{\vscar}, which  is Algorithm~\ref{alg:one}. We compare \scar\ to Algorithm~\ref{alg:one} to demonstrate the importance of hiding the noise and optimizing the number of queries.
	\item \textbf{\simba}, which is Algorithm 1 in \cite{simba} with the Cartesian basis and $\varepsilon = 1$. \simba\ is an algorithm for attacking (colored) images in black-box settings using a small number of queries. At every iteration, it samples a direction $\mathbf{q}$ and takes a step towards $\varepsilon \mathbf{q}$ or $- \varepsilon \mathbf{q}$ if one of these improves the objective. In the setting where $\mathbf{q}$ is sampled from  the Cartesian basis and $\varepsilon = 1$, \simba\ corresponds to an $L_{0}$ attack on binary images which iteratively chooses a random pixel and flips it if doing so results in a decrease in the confidence of the true label.
	\item \textbf{\pa} \cite{pointwise} first applies random salt and pepper noise until the image is misclassified. It then greedily returns each modified pixel to its original color if the image remains misclassified. We use the implementation of this attack available in Foolbox \cite{rauber2017foolbox}.
\end{itemize}

\paragraph{Metrics.} To evaluate the performance of each attack $A$ over a model $F$ and test set $X$, we use three metrics.

\begin{itemize}
	\item The \textbf{success rate} of $A$ is the fraction of images $\bx \in X$ for which the output image $\bx' = A(\bx)$ is adversarial, i.e. the predicted label $y'$ of $\bx'$ is different from the true label $y$ of $\bx$. We only attack images $\bx$ which are initially correctly classified by $F$.
	
	\item We use the $\mathbf{L_0}$ \textbf{distance} to measure how similar an image $\bx' = A(\bx)$ is to the original image $\bx$, which is the number of pixels where $\bx$ and $\bx'$ differ. 
	
	\item The \textbf{number of queries} to model $F$ to obtain output image $\bx' = A(\bx)$. 
\end{itemize}

\paragraph{The distance constraint $k$.} Because the image dimension $d$ differs for each experiment, we seek a principled approach to selecting the maximum $L_{0}$ distance $k$. For an image $\bx$ with label $y$, the $L_0$ constraint is
\[k = \alpha \cdot \frac{\mathcal{F}(\bx)}{|y|},\]
where $\mathcal{F}(\bx)$ counts the number of pixels in the foreground of the image, $\alpha \in [0, 1]$ is a fixed fraction, and $|y|$ represents the number of characters in $y$, e.g. $|\textsf{23FC6A}| = 6$. In other words, $k$ is a fixed fraction of the average number of pixels per character in $\bx$. In our experiments, we set $\alpha = \frac{1}{5}$.

\subsection{Digit and character recognition systems}
\label{sec:mnist}

\begin{figure*}
	\centering
	\includegraphics[width=.99\linewidth]{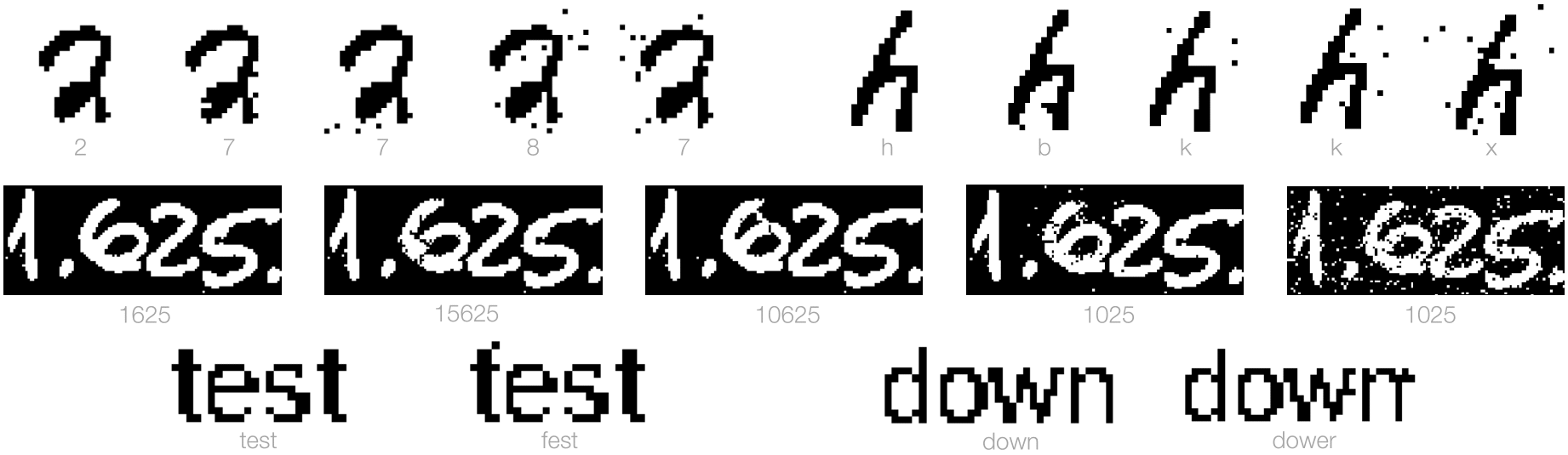}
	\caption{Examples of attacks on a CNN trained over MNIST (top left), a CNN trained over EMNIST (top right), an LSTM for handwritten numbers (center), and Tesseract for typed words (bottom). The images correspond to, from left to right, the original image, the outputs of \scar, \vscar, \pa, and \simba. The predicted labels are in light gray below each image. For Tesseract attacks (bottom), we show the original image and \scar 's output.}
	\label{fig:examples}
\end{figure*}%

For each experiment, we provide further details about the datasets and models in the appendix.

\paragraph{The dataset.} We train models over binarized versions of the MNIST digit~\cite{mnist}  and EMNIST letter~\cite{emnist} datasets. We binarize each dataset with the map $x \mapsto \lfloor \frac{x}{128} \rfloor$. We additionally preprocess the EMNIST letter dataset to only include lowercase letters, since an uppercase letter which is misclassified as the corresponding lowercase letter does not change the semantic meaning of the overall word. We randomly select 10 correctly-classified samples  from each class in MNIST and EMNIST lowercase letters to form two datasets to attack.

\paragraph{Models.} We consider five models, trained in the same manner for the MNIST and EMNIST  datasets. Their Top-1 accuracies are given in Table~1.

\begin{table}[H]
	\centering
	\begin{tabular}{c c c}
		\textbf{Model} & \textbf{MNIST} & \textbf{EMNIST}\\ 
		& (Top-1 accuracy) & (Top-1 accuracy)\\ \hline
		LogReg & $0.919$& $0.809$  \\ \hline
		MLP2 & $0.980$ & $0.934$\\ \hline
		CNN & $0.990$ & $0.950$\\ \hline
		LeNet5 & $0.990$ & $0.943$\\ \hline
		SVM & $0.941$ & $0.875$
	\end{tabular}
	\label{t:accuracy}
	\caption{The Top-1 accuracies of a logistic regression model, a 2-layer perceptron, a convolutional neural network,  a neural network from \cite{lecun1998gradient}, and a support vector machine trained and evaluated over the MNIST and EMNIST datasets.}
\end{table}

\paragraph{Results.} We discuss the results of the attacks on the CNN model trained over MNIST and on the LeNet5 model trained over EMNIST, which are representative cases of the results for neural network models. The full results for the remaining 8 models are in the appendix.

In Figure~\ref{fig:plots}, we observe that for fixed $L_0$ distances $\kappa \leq k$, \vscar \ has the highest success rate on the CNN model, i.e. the largest number of successful attacks with an $L_0$ distance at most $\kappa$. For example, $80\%$ of the images were successfully attacked by flipping at most $7$ of the $784$ pixels of an MNIST image. \scar's success rate by $L_0$ distance is very close to  \vscar, but enjoys two main advantages: first, the number of queries needed for these attacks is significantly smaller. Second, as shown  in Figure~\ref{fig:examples}, even though the $L_0$ distance is slightly larger than \vscar, the noise is less visible.  \simba \ requires very few queries to obtain a success rate close to $40\%$ and $65\%$ respectively on the CNN and LeNet5, but this attack results in images with   large $L_0$ distances. The success rate of this attack does not increase past $40\%$ and $65\%$ with a larger number of queries because  the noise constraint $k$ is reached. \pa \ obtains a success rate close to $85\%$ and $98\%$ on the CNN and LeNet5, respectively. The average $L_0$ distance of the images produced by \pa \ is between \scar  \ and \simba. Overall, \scar\ obtains the best number of queries and $L_0$ distance combination. It is the only attack, together with \vscar, which consistently obtains a success rate close to $100$ percent on MNIST and EMNIST models.

\subsection{LSTM on handwritten numbers} \label{sec:lstm}
\paragraph{The dataset.}
We train an OCR model on the ORAND-CAR-A dataset, part of the HDSRC 2014 competition on handwritten strings \cite{diem2014icfhr}. This dataset consists of 5793 images from real bank checks taken from a Uruguayan bank. Each image contains between 2 and 8 numeric characters.

\begin{figure*}
	\centering
	\includegraphics[trim=.5cm 0 0 0, width=.225\linewidth]{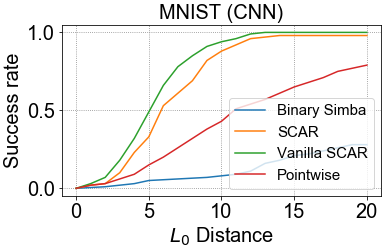}
	\hspace{.2cm}
	\includegraphics[width=.225\linewidth]{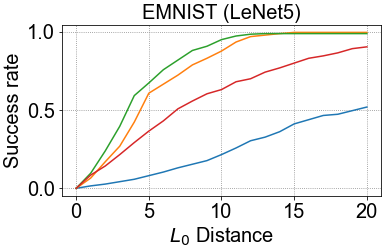}
	\hspace{.2cm}
	\includegraphics[width=.225\linewidth]{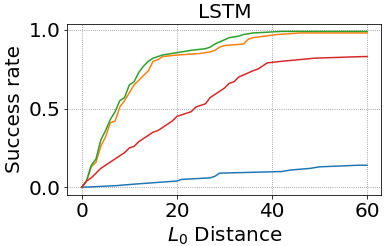}
	\hspace{.2cm}
	\includegraphics[width=.235\linewidth]{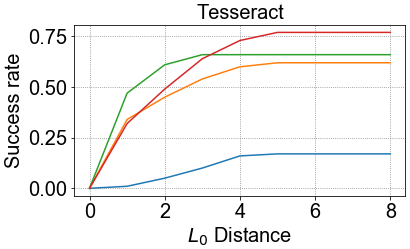}
	\includegraphics[width=.8\linewidth]{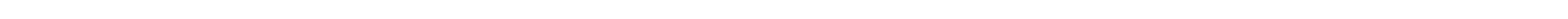}
	\includegraphics[width=.235\linewidth]{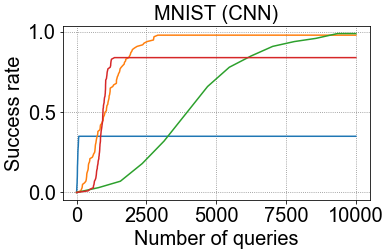}
	\hspace{.2cm}
	\includegraphics[width=.225\linewidth]{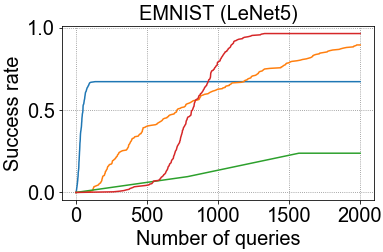}
	\hspace{.2cm}
	\includegraphics[width=.225\linewidth]{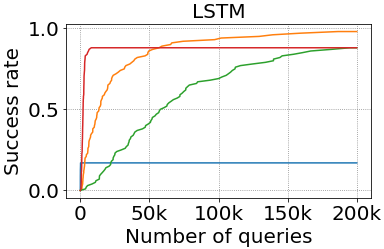}
	\hspace{.2cm}
	\includegraphics[width=.235\linewidth]{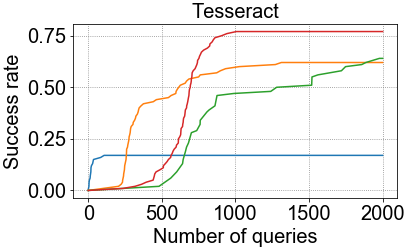}
	\caption{Success rate by $L_0$ distance and by number of queries for a CNN model on MNIST, a LeNet5 model on EMNIST, an LSTM model on handwritten numbers, and the Tesseract model over printed words.}
	\label{fig:plots}
\end{figure*}

\paragraph{The LSTM model.}
We implement the OCR model described in \cite{mor2018confidence}, which consists of a convolutional layer, followed by a 3-layer deep bidirectional LSTM, and optimizes for CTC loss.  The trained model achieves a precision score of $85.7\%$ on the test set of ORAND-CAR-A, which would have achieved first place  in the HDSRC 2014 competition.

\paragraph{Results.} The results  are similar to the attacks on the CNN-MNIST model. \simba \ has a less than $20$ percent success rate. \pa \ obtains a high success rate with a  small number of queries, but is outperformed by \scar \ and \vscar \ in terms of $L_0$ distance. Due to the images being high-dimensional ($d \approx 50,000$) and consisting of multiple digits, the reason why \simba \ performs poorly is because the flipped pixels are spread out over the different digits (see Figure~\ref{fig:examples}).

\subsection{Tesseract on printed words} \label{sec:tesseract}

We show that the vulnerability of OCR systems concerns not only handwritten text, but also printed text, which one might expect to be more robust. We explore this vulnerability in the context of English words and show that in many cases, a change in a single pixel suffices for a word to be misclassified as another word in the English dictionary.

\paragraph{The Tesseract model.} Tesseract is a popular open-source text recognition system that is sponsored by Google \cite{smith2007overview}. We used Tesseract version 4.1.1 trained for the English language. Tesseract 4 is based on an LSTM model (see \cite{ocr} for a detailed description of the architecture of Tesseract's model). We note that Tesseract  is designed for printed text, rather than handwritten.

\paragraph{The dataset.} We attack 100 images of single  printed English words that consist of four characters (the full list of words, together with the labels of the attacked images, can be found in the appendix), chosen randomly among those correctly classified by Tesseract, which has  $96.5\%$ accuracy rate.  For some attacked images with a lot of noise, Tesseract does not recognize any word and rejects the input. Since the goal of these attacks is to misclassify images as words with a different meaning, we only consider an attack to be successful if the adversarial image produced is classified as a word in the English dictionary. For example, consider an attacked image of the word ``one". If Tesseract does not recognize any word in this image, or recognizes ``oe" or ``:one", we do not count this image as a successful attack.

\paragraph{Results.} The main  result for the attacks on Tesseract is that, surprisingly, for around half of the images, flipping a \emph{single} pixel results in the image being classified as a different word in the English dictionary (see Figure~\ref{fig:plots}). \scar \ again produces attacks with $L_0$ distance close to \vscar, but with fewer queries.  Unlike the other models,  \scar \ and \vscar \ do not reach close to $100\%$ accuracy rate. We hypothesize that this is due to the fact that, unlike digits, not every combination of letters forms a valid label, so many words have an edit distance of multiple characters to get to the closest different label. In these experiments, \pa \ obtains the highest success rate.

\begin{figure}
	\centering
	\begin{minipage}[t]{.45\textwidth}
		\centering
		\includegraphics[width=.95\linewidth]{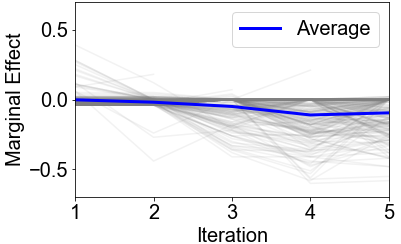}
		\caption{The gain from each pixel for the five iterations it took to successfully attack the word ``idle'' on Tesseract.}
		\label{fig:temporal}
	\end{minipage}%
	\hspace{1cm}
	\begin{minipage}[t]{.45\textwidth}
		\centering
		\raisebox{.1cm}{\includegraphics[width=.95\linewidth]{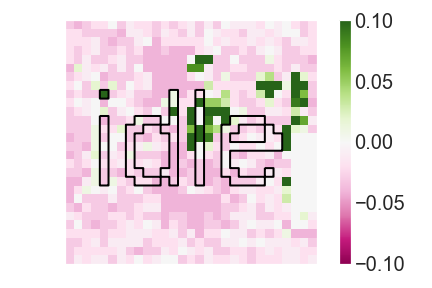}}
		\caption{Heatmap of the gains from flipping a pixel on the word ``idle" with Tesseract.}
		\label{fig:spatial}
	\end{minipage}
\end{figure}

\paragraph{Spatial and temporal correlations.} As discussed in Section~\ref{sec:alg}, \scar \ exploits spatial and temporal correlations to optimize the number of queries needed. As an example, we consider \scar \ attacking Tesseract on the word ``idle''.

In Figure~\ref{fig:temporal} we plot a separate line for each pixel $p$ and the corresponding decrease in confidence from flipping that pixel at each iteration. We first note the pixels with the smallest gains at some iteration are often among the pixels with the smallest gains in the next iteration, which indicates temporal correlations.  Most of the gains are negative, which implies that, surprisingly, for most pixels, flipping that pixel  \emph{increases} the confidence of the true label. Thus, randomly choosing which pixel to flip, as in \simba, is ineffective.

Figure~\ref{fig:spatial} again shows the gain from flipping each pixel, but this time as a heatmap for the gains at the first iteration. We note that most pixels with a large gain have at least one neighboring pixel that also has a large gain. This heatmap illustrates that  first querying the neighboring pixels of the previous pixel flipped is an effective technique to reduce the number of queries needed to find a high gain pixel.

\subsection{Check processing systems}
\label{sec:checkresults}

\begin{figure*}
	\centering
	\includegraphics[width=1\linewidth]{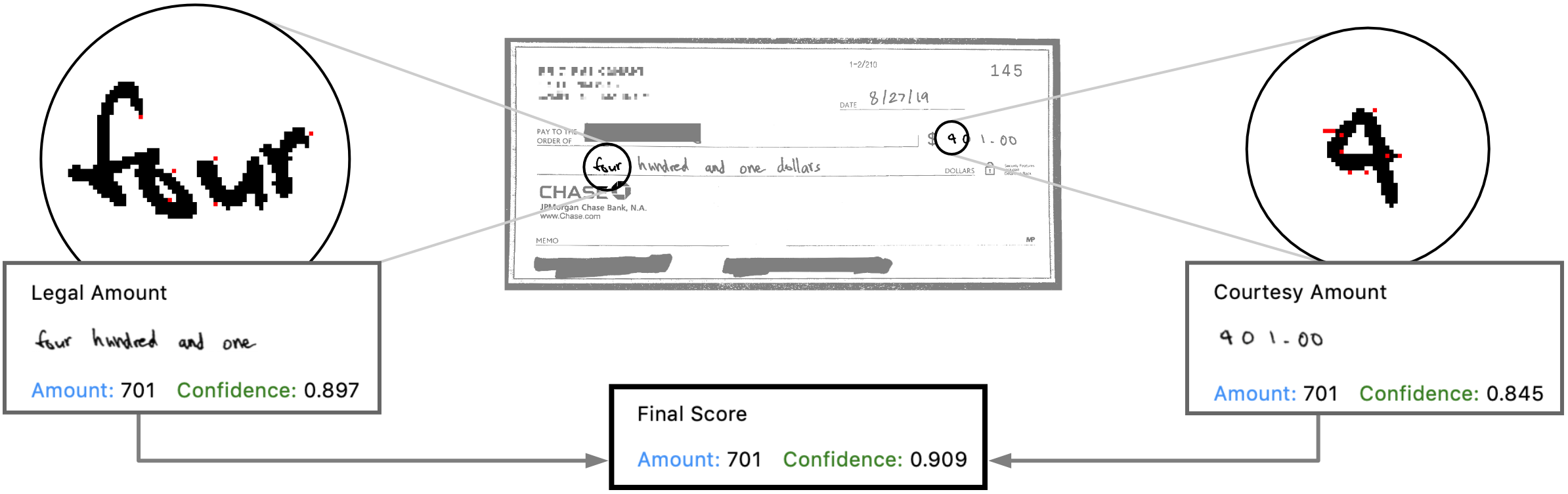}
	\caption{An example of a check for $\$401$ attacked by Algorithm~\ref{alg:check} that is misclassified with high confidence as $\$701$ by a check processing system used by US banks.}
	\label{fig:full_check} 
\end{figure*}

We licensed software from providers  of check processing systems to major US banks and applied the attack described in Algorithm~\ref{alg:check}. This software includes the prediction confidence as part of their output. Naturally,  access to these systems is limited and the cost per query is significant. We confirm the findings from the previous experiments  that \scar, which is used as a subroutine by Algorithm~\ref{alg:check}, is effective in query-limited settings and showcase the vulnerability of OCR systems used in the industry. Check fraud is a major concern for US banks; it caused over $\$1.3$ billion in losses in 2018~\cite{checkreport}.

We obtained a $17.1\%$ success rate ($19$ out of $111$ checks) when attacking check processing systems used by banks for mobile check deposits. As previously mentioned, a check is successfully attacked when both amounts on the check are misclassified as the same wrong amount (see Figure~\ref{fig:full_check}). Since check fraud occurs at large scale, we believe that this vulnerability raises serious concerns.\footnote{Regarding physical realizability: instead of printing  an adversarial check in high resolution, an attacker can redirect the camera input of a mobile phone to arbitrary image files, which avoids printing and taking a picture of an adversarial check.   This hacking of the camera input is  easy to perform on Android.}

\begin{table}[H]
	\centering
	\begin{tabular}{c c c}
		\textbf{Classifier} & \textbf{Queries} & \textbf{$L_{0}$ distance}\\ \hline
		CAR ($F_C$) &
		$1615$ & $11.77$ \\
		\hline
		LAR ($F_L$) &
		$8757$ &  $14.85$ \\
	\end{tabular}
	\label{t:table}
	\caption{Average number of queries and $L_0$ distance on the CAR and LAR classifiers for the high confidence successful attacks.}
\end{table}

We say that a check is misclassified with high confidence if  the amounts written in number and words are each classified with confidence at least $50\%$ for the wrong label. We obtained high confidence misclassification for $76.5\%$ of the checks successfully attacked.   In Figure~\ref{fig:full_check}, we show the output of a check for $\$401$  that has both amounts classified as $701$ with confidence at least $80\%$. On average, over the checks for which we obtained high confidence misclassification, Algorithm~\ref{alg:check} flipped $11.77$  and $14.85$ pixels and made $1615$ and $8757$ queries for the amounts in numbers and words respectively. The checks are  high resolution, with widths of size $1000$. Additional examples of checks misclassified with high confidence can be found in the appendix.

\newpage

\bibliography{biblio}
\bibliographystyle{plainnat}

\newpage

\section*{Appendix}

\appendix

\section{Missing analysis from Section~\ref{sec:robust}}

\thmone*
\begin{proof} In this proof, we assume that binary images have pixel values in  $\{-1, 1\}$ instead of $\{0,1\}$. We consider a linear classifier $F_{\bw^{\star}_1, \ldots, \bw^{\star}_m}$ such that the predicted label $y$ of a binary image $\bx$ is $y = \argmax_i \bx^\intercal \bw^{\star}_i$.
	
	We wish to show  the existence of weight vectors $\bw^{\star}_1, \ldots, \bw^{\star_m}$ that all have large pairwise $L_0$ distance. This is closely related to error-correction codes in coding theory which, in order to detect and reconstruct a noisy codes, also aims to construct binary codes with large pairwise distance.
	
	We do this using the probabilistic method. Consider $m$ binary weight vectors $\bw_1, \ldots, \bw_m$ chosen uniformly at random, and independently, among all $d$-dimensional binary vectors  $\bw \in \{-1,1\}^d$.  By the Chernoff bound, for all $i, j \in [m]$, we have that for $0 < \delta < 1$,
	$$\Pr\left[ \|\bw_i - \bw_j\|_0 \leq (1 - \delta) d / 2\right] \leq e^{-\delta^2 d /4}.$$
	There are $\binom{m}{2} < m^2$ pairs of images $(i,j)$. By a union bound and with $\delta =   \sqrt{8\log m /d}$, we get that 
	$$\Pr\left[ \|\bw_i - \bw_j\|_0 > d/2 - \sqrt{2d \log m} : \text{for all } i, j \in [m], i \neq j \right] > 1 - m^2 e^{-\delta^2 d / 4} > 0.$$ Thus, by the probabilistic method, there exists $\bw^{\star}_1, \ldots, \bw^{\star}_m$ such that $ \|\bw^{\star}_i - \bw^{\star}_j\|_0 >  d/2 - \sqrt{2d \log m} $ for all $i, j \in [m]$.
	
	It remains to show that the linear classifier $F_{\bw^{\star}_1, \ldots, \bw^{\star_m}}$ satisfies the condition of the theorem statement.
	For class $i$, consider the binary image $\bx_i = \bw^{\star}_i$. Note that for binary images $\bx \in \{-1, 1\}^d$, we have $\bx^\intercal  \bw^{\star}_i  = d - 2 \|\bx - \bw^{\star}_i\|_0$.  Thus,  $\bx^\intercal_i  \bw^{\star}_i = d$ and $\argmax_{j \neq i} \bx^\intercal_i  \bw^{\star}_j < 2\sqrt{2d \log m}$, and we get  $\bx_i^\intercal  \bw^{\star}_i  - \argmax_{j \neq i} \bx_i^\intercal  \bw^{\star}_j > d - 2\sqrt{2d \log m}$. Each pixel change reduces this difference by at most $4$. Thus, for all $\bx'$ such that  $\|\bx_i - \bx'\|_0   \leq (d - 2\sqrt{2d \log m})/4 = d/4 - \sqrt{2d \log m}/2$, we have $\bx'^\intercal  \bw^{\star}_i  - \argmax_{j \neq i} \bx'^\intercal  \bw^{\star}_j > 0$ and the predicted label of $\bx'$ is $i$.
\end{proof}

\thmtwo*
\begin{proof}
	Consider the following linear classifier
	\[F(\bx) = \begin{cases}0 & \text{if $\vec{1}^{T}\vec{x} -  x_0 /2 < \frac{d}{2}$} \\ 1 & \text{otherwise}\end{cases}.\]
	Informally, this is a classifier which assigns label $0$ if $\Vert \bx \Vert_{0} < d/2$ and label $1$ if $\Vert \bx \Vert_{0} > d/2$. The classifier tiebreaks the $\Vert \bx \Vert_{0} = d/2$ case depending on whether or not the first position in $\bx$ is a $1$ or a $0$. Notice that this classifier assigns exactly half the space the label $0$, and the other half the label $1$.

	Consider class $0$ and let $ \U(0)$ be the uniform distribution over all $\bx$ in class $0$. We have $$\Pr_{\bx \in \U(0)} [\Vert \bx\Vert_{0} = s] = \frac{1}{2^{d}}\binom{d}{s}$$ when $s < d/2$ and $\Pr_{\bx \in \U(0)} [\Vert \bx\Vert_{0} = s] = \frac{1}{2^{d+1}}\binom{d}{s}$ when $s = d/2$. The binomial coefficient $\binom{d}{s}$ is maximized when $s = d/2 $.
	For all $d \in \mathbb{Z}^{+}$, Stirling's approximation gives lower and upper bounds of $\sqrt{2\pi} d^{d+\frac{1}{2}}e^{-d} \leq d! \leq d^{d + \frac{1}{2}}e^{-d+1}$. Since $d$ is even, we get  $$ \binom{d}{d/2} = \frac{d!}{(\frac{d}{2}!)^{2}} \leq \frac{e2^{d}}{\pi\sqrt{d}}.$$  Therefore, we have that for all $s$,
	\[\Pr_{\bx \in \U(0)} [\Vert \bx\Vert_{0} = s] \leq \frac{1}{2^{d}}\binom{d}{ d/2 } \leq \frac{e2^{d}}{\pi\sqrt{d}},\]
	which implies 
	\[\Pr_{\bx \in \U(0)} \left[\left|\Vert \bx\Vert_{0} - d/2\right| \geq \frac{\pi\sqrt{d}}{4e}\right] \geq 1 - \frac{2\pi\sqrt{d}}{4e}\cdot\frac{e}{\pi\sqrt{d}} = \frac{1}{2}.\]
	The same argument follows similarly for members of class $1$. Therefore, for either class, at least half of the images $\vec{x}$ of that class are such that $\left|\Vert \bx\Vert_{0} - d/2\right| \geq \frac{\pi\sqrt{d}}{4e} \geq \frac{\sqrt{d}}{4}$. These images require at least $\frac{\sqrt{d}}{4}$ pixel flips in order to change the predicted label according to $F$, and we obtain the bound in the theorem statement.
\end{proof}

\section{Additional Description of Datasets and Models}

\subsection{Digit and character recognition systems}

\paragraph{The datasets.} We preprocess the EMNIST letter dataset to only include lowercase letters, since an uppercase letter which is misclassified as the corresponding lowercase letter does not change the semantic meaning of the overall word. We randomly select 10 correctly-classified samples  from each class in MNIST and EMNIST lowercase letters to form two datasets to attack.

\paragraph{Models.}  We consider the following five models, trained in the same manner for the MNIST and EMNIST  datasets. For each model, we also list their Top-1 accuracies on MNIST and EMNIST.

\begin{itemize}
	\item \textbf{LogReg:} We create a logistic regression model by flattening the input and follow this with a fully connected layer with softmax activation. (MNIST: 91.87\%\ /\ EMNIST: 80.87\%)
	\item \textbf{MLP2:} We create a 2-layer MLP by flattening the input, followed by two sets of fully connected layers of size 512 with ReLU activation and dropout rate $0.2$. We then add a fully connected layer with softmax activation. (MNIST: 98.01\%\ /\ EMNIST: 93.46\%)
	\item \textbf{CNN:} We use two convolutional layers of 32 and 64 filters of size $3 \times 3$, each with ReLU activation. The latter layer is followed by a $2 \times 2$ Max Pooling layer with dropout rate $0.25$. (MNIST: 99.02\%\ /\ EMNIST: 95.04\%)
	
	This output is flattened and followed by a fully connected layer of size 128 with ReLU activation and dropout rate $0.5$. We then add a fully connected layer with softmax activation.
	\item \textbf{LeNet 5:} We use the same architecture as in \cite{lecun1998gradient}. (MNIST: 99.01\%\ /\ EMNIST: 94.33\%)
	\item \textbf{SVM:} We use the \texttt{sklearn} implementation with default parameters. (MNIST: 94.11\%\ /\ EMNIST: 87.53\%)
\end{itemize}

Except for the SVM, we train each model for 50 epochs with batch size 128, using the Adam optimizer with a learning rate of $10^{-3}$. The experimental results for CNN on MNIST and LeNet5 on EMNIST are shown in Section~5.

\subsection{LSTM on handwriten numbers}

\paragraph{The dataset.}
We train an OCR model on the ORAND-CAR-A dataset, part of the HDSRC 2014 competition on handwritten strings \cite{diem2014icfhr}. This dataset consists of 5793 images from real bank checks taken from a Uruguayan bank. The characters in these images consist of numeric characters (0-9) and each image contains between 2 and 8 characters. These images also contain some background noise due the real nature of the images. We observe the train/test split given in the initial competition, meaning that we train our model on 2009 images and attack only a randomly selected subset from the test set (another 3784 images). The images as presented in the competition were colored, but we binarize them in a similar preprocessing step as done for MNIST/EMNIST datasets.

\paragraph{The LSTM model.}
We implement the OCR model described in \cite{mor2018confidence}, which consists of a convolutional layer, followed by a 3-layer deep bidirectional LSTM, and optimizes for CTC loss. CTC decoding was done using a beam search of width 100. The model was trained with the Adam optimizer using a learning rate of $10^{-4}$, and was trained for 50 epochs. The trained model achieves a precision score of $.857$ on the test set of ORAND-CAR-A, which would have achieved first place in that competition.

\subsection{Tesseract on printed words}

\paragraph{The model.} We use Tesseract version 4.1.1 trained for the English language. Tesseract 4 is based on an LSTM model (see \cite{ocr} for a detailed description of the architecture of Tesseract's model). 

\paragraph{The dataset.} We attack images of a single  printed English word. Tesseract supports a large number of languages, and we use the version of Tesseract trained for the English language. We picked words of length four in the English dictionary. We then rendered these words in black over a white background using the Arial font in size 15. We added 10 white pixels for padding on each side of the word.  The accuracy rate over $1000$ such images of English words of length four chosen at random is $0.965$ and the average confidence among words correctly classified is $0.906$. Among the words correctly classified by Tesseract, we selected $100$ at random to attack.

For some attacked images with a lot of noise, Tesseract does not recognize any word and rejects the input. Since the goal of these attacks is to misclassify images as words with a different meaning, we only consider an attack to be successful if the adversarial image produced is classified as a word in the English dictionary. For example, consider an attacked image of the word ``one". If Tesseract does not recognize any word in this image, or recognizes ``oe" or ``:one", we do not count this image as a successful attack.

We restricted the attacks to pixels that were at distance at most three of the box around the word.  Since our algorithm only considers boundary pixels, this restriction avoids  giving an unfair advantage to our algorithm in terms of total number of queries. In some cases, especially images with a lot of noise, Tesseract does not recognize any word and rejects the input. Since the goal of these attacks is to misclassify images as words with a different meaning than the true word, we only consider an attack to be successfull if the adversarial image produced is classified as a word in the English dictionary. For example, consider an image with the text ``one". If Tesseract does not recognize any word in this image, or recognizes ``oe" or ``:one", we do not count this image as a successful attack.  

\begin{figure*}
	\centering
	\includegraphics[trim=.5cm 0 0 0, width=.225\linewidth]{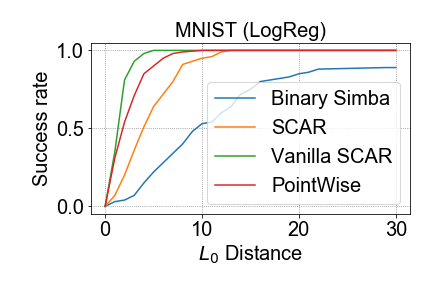}
	\hspace{.2cm}
	\includegraphics[ width=.225\linewidth]{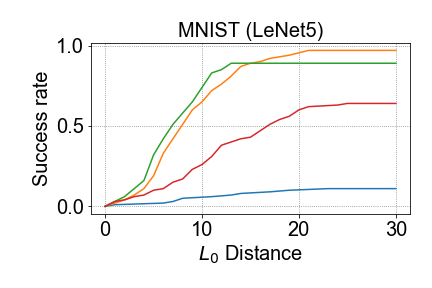}
	\hspace{.2cm}
	\includegraphics[width=.225\linewidth]{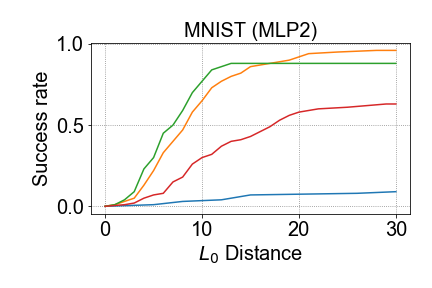}
	\hspace{.2cm}
	\includegraphics[width=.225\linewidth]{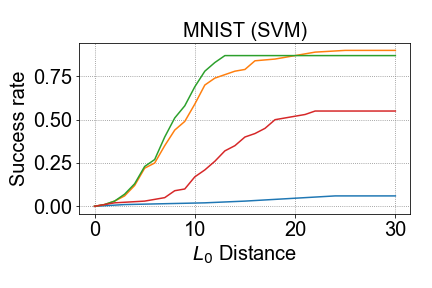}
	\includegraphics[width=.8\linewidth]{blank.png}
	\includegraphics[width=.225\linewidth]{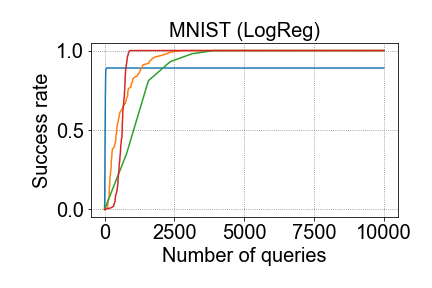}
	\hspace{.2cm}
	\includegraphics[width=.235\linewidth]{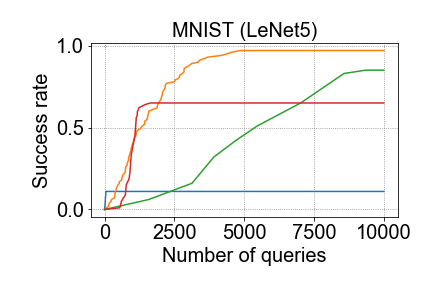}
	\hspace{.2cm}
	\includegraphics[width=.225\linewidth]{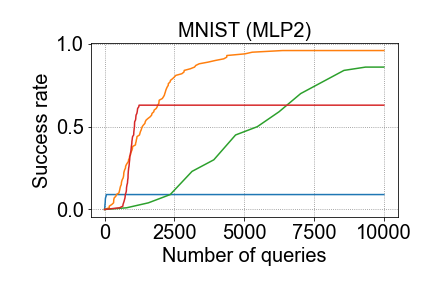}
	\hspace{.2cm}
	\includegraphics[width=.225\linewidth]{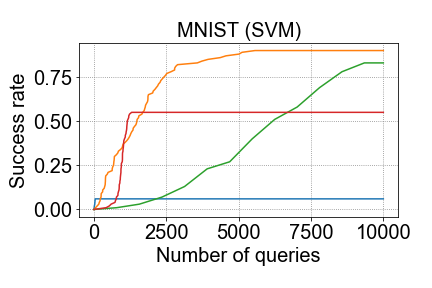}
	\caption{Success rate by $L_0$ distance and by number of queries for four different models on MNIST.}
	\label{fig:plotsMNIST}
\end{figure*}

\begin{figure*}
	\centering       
	\includegraphics[trim=.5cm 0 0 0, width=.225\linewidth]{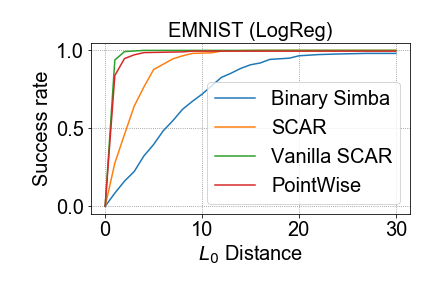}
	\hspace{.2cm}
	\includegraphics[width=.225\linewidth]{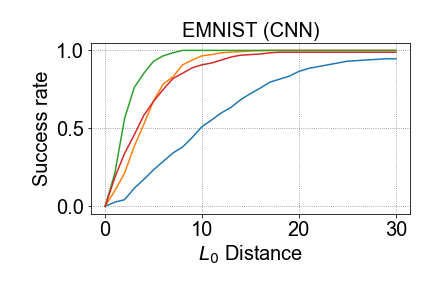}
	\hspace{.2cm}
	\includegraphics[width=.225\linewidth]{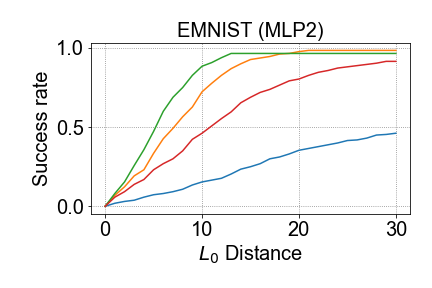}
	\hspace{.2cm}
	\includegraphics[width=.225\linewidth]{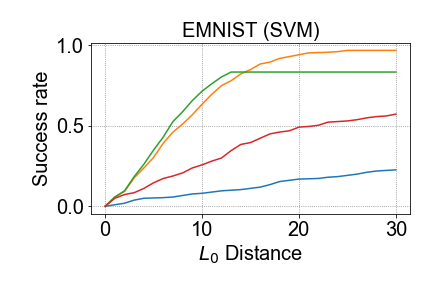}
	\includegraphics[width=.8\linewidth]{blank.png}
	\includegraphics[width=.225\linewidth]{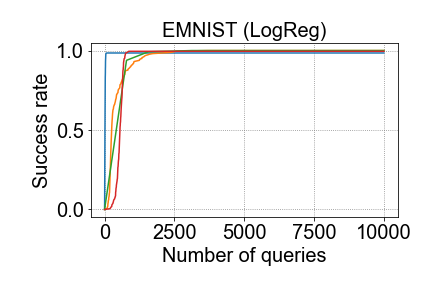}
	\hspace{.2cm}
	\includegraphics[width=.235\linewidth]{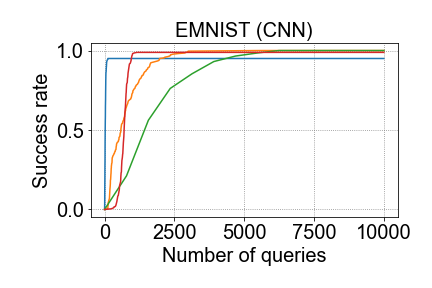}
	\hspace{.2cm}
	\includegraphics[width=.225\linewidth]{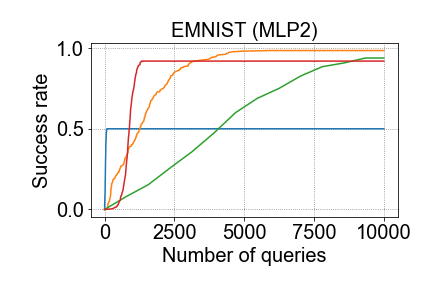}
	\hspace{.2cm}
	\includegraphics[width=.225\linewidth]{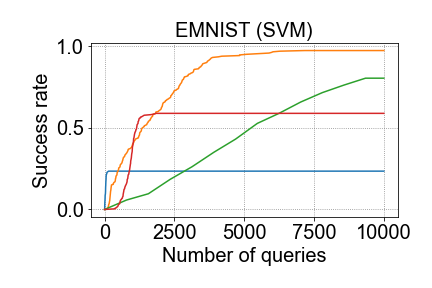}
	\caption{Success rate by $L_0$ distance and by number of queries for four different models on EMNIST.}
	\label{fig:plotsEMNIST}
\end{figure*}

\begin{figure}[H]
	\centering
	\includegraphics[width=.19\linewidth]{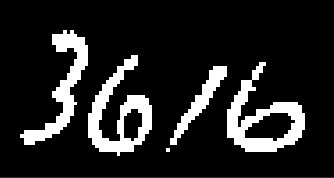}
	\includegraphics[width=.19\linewidth]{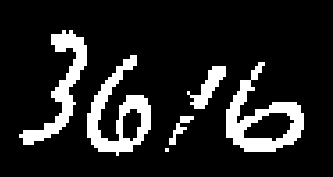}
	\includegraphics[width=.19\linewidth]{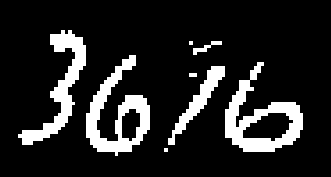}
	\includegraphics[width=.19\linewidth]{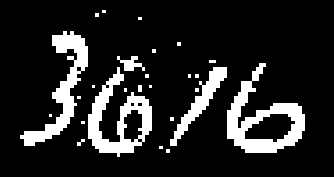}
	\includegraphics[width=.19\linewidth]{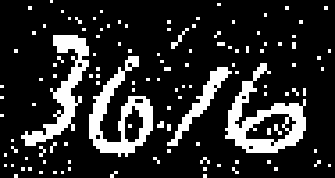}
	\caption{Examples of attacks on the LSTM for handwritten numbers. The images correspond to, from left to right, the original image, the outputs of \scar, \vscar, \pa, and \simba.}
	\label{fig:3616}
\end{figure}

\section{Additional Experimental Results}

In Figure~\ref{fig:plotsMNIST} and Figure~\ref{fig:plotsEMNIST}, we provide additional experimental results on the MNIST and EMNIST datasets. In Figure~\ref{fig:3616}, we give additional examples of attacks on the LSTM model for handwritten number. In Table~1, we list the $100$ English words of length $4$ we attacked together with the word label of the image resulting from running \scar.

\begin{table*}[h]
	\centering
	\begin{tabular}{| c | c | c | c | c | c | c | c | c |}
		\hline
		\textbf{Original} & \textbf{Label} & \hspace{1cm} & \textbf{Original} & \textbf{Label} & \hspace{1cm} & \textbf{Original} & \textbf{Label}  \\
		\textbf{word} & \textbf{from \scar} & \hspace{1cm} & \textbf{word} & \textbf{from \scar} & \hspace{1cm} &\textbf{word} & \textbf{from \scar} \\\hline \hline
		down& dower   & &
		race	 & rate & &
		punt	  & pant  \\ \hline
		fads  &	fats & &
		nosy&	rosy & &
		mans&	mans \\ \hline
		pipe&	pie & &
		serf&	set & &
		cram&	ram \\ \hline
		soft	& soft & &
		dare&	dare & &
		cape&	tape \\ \hline
		pure&	pure & &
		hood	&hoot & &
		bide&	hide \\ \hline
		zoom&	zoom & &
		yarn&	yam & &
		full&	fall \\ \hline
		lone&	tone & &
		gorp&	gore & &
		lags&	fags \\ \hline
		fuck&	fucks & &
		fate&	ate & &
		dolt&	dot \\ \hline
		fist&	fist & &
		mags&	mays & &
		mods	& mots \\ \hline
		went	& weal & &
		oust &	bust & &
		game&	game \\ \hline
		omen&	men & &
		rage	& rage & &
		taco	& taco \\ \hline
		idle	& die & &
		moth	 & math & &
		ecol	& col \\ \hline
		yeah	& yeah & &
		woad &	woad & &
		deaf	& deaf \\ \hline
		feed	& feet & &
		aged	& ed & &
		vary	& vary \\ \hline
		nuns	& runs & &
		dray	& ray & &
		tell	& tel \\ \hline
		educ	& educ & &
		ency	& ency & &
		avow &	vow \\ \hline
		gush	& gust & &
		pres	& press & &
		wits &	wits \\ \hline
		news	 & news & &
		deep	& sleep & &
		weep &	ween \\ \hline
		swim	 & swim & &
		bldg	& bid & &
		vile	& vie  \\ \hline
		hays	& nays & &
		warp	& war & &
		sets	& nets \\ \hline
		tube	& lube & &
		lost	& lo & &
		smut	& snout \\ \hline
		lure	& hare & &
		sqrt	& sat & &
		mies	& miles \\ \hline
		romp	 & romp & &
		okay	& okay & &
		boot	& hoot \\ \hline
		comp &	camp & &
		kept	& sept & &
		yipe	& vie \\ \hline
		pith &	pithy & &
		herb	& herbs & &
		hail	& fail \\ \hline
		ploy	& pro & &
		show &	how & &
		saga	& gaga \\ \hline
		toot	& foot & &
		hick	& nick & &
		drat	& rat \\ \hline
		boll	& boil & &
		tout	& foul & &
		limo	& lino \\ \hline
		elev	& ale & &
		blur	& bur & &
		idem	& idler \\ \hline
		dank	& dank & &
		biog	& dog & &
		twin	& twins \\ \hline
		gild	& ail & &
		lain	& fain & &
		slip	& sip \\ \hline
		waxy	 & waxy & &
		gens	& gents & &
		yeti	& yet \\ \hline
		test	& fest & &
		mega	& mega & &
		loge	& toge \\ \hline
		pups	& pups & & & &  \\ \hline
	\end{tabular}
	\label{table}
	\caption{The $100$ English words of length $4$ we attacked together with the word label of the image resulting from running \scar.}
\end{table*}

Finally, in Figure~6, we show additional examples of our attacks on check processing systems.

\begin{figure}[h]
	\centering
	\includegraphics[width=.13\linewidth]{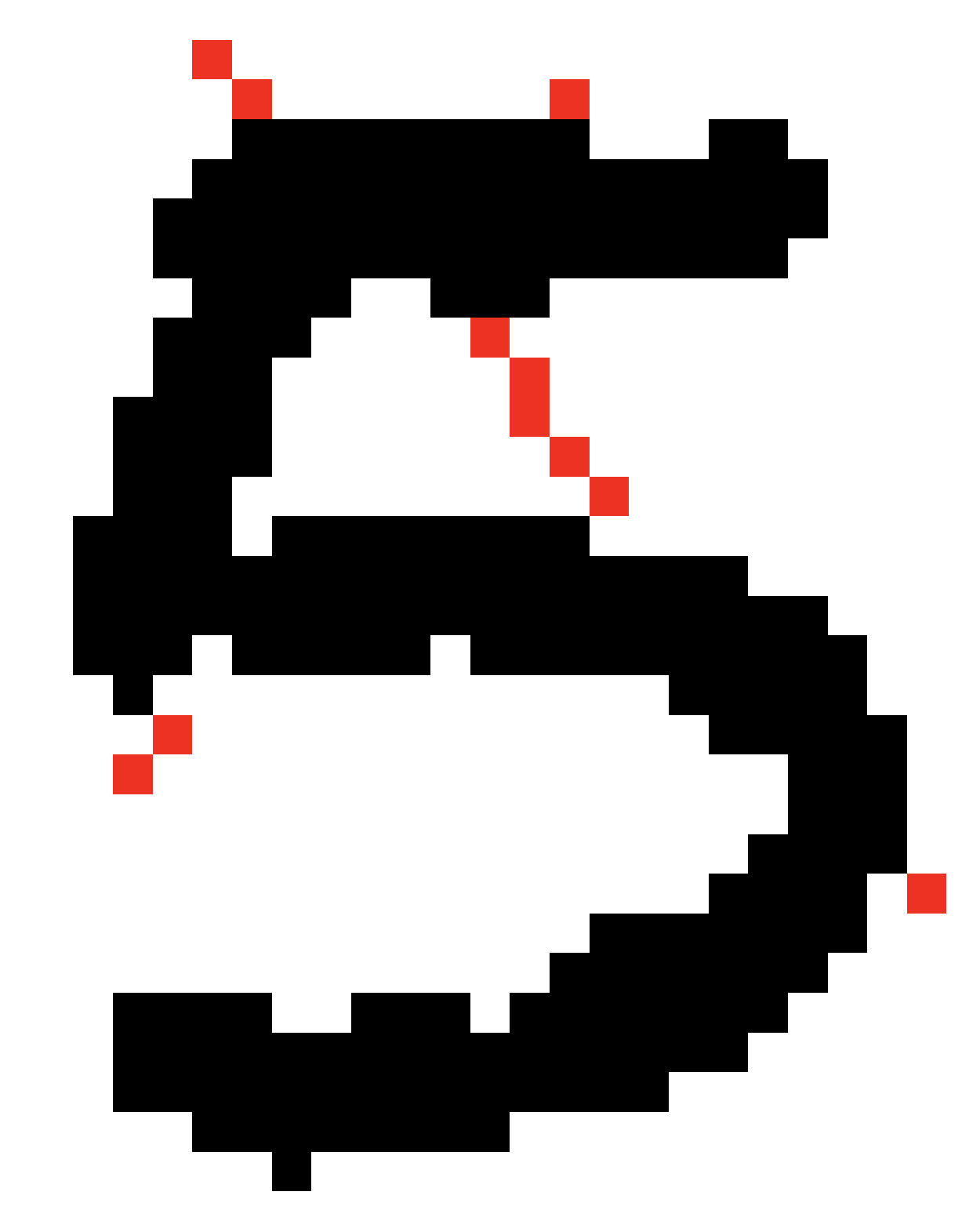}
	\hspace{3cm}
	\includegraphics[width=.09\linewidth]{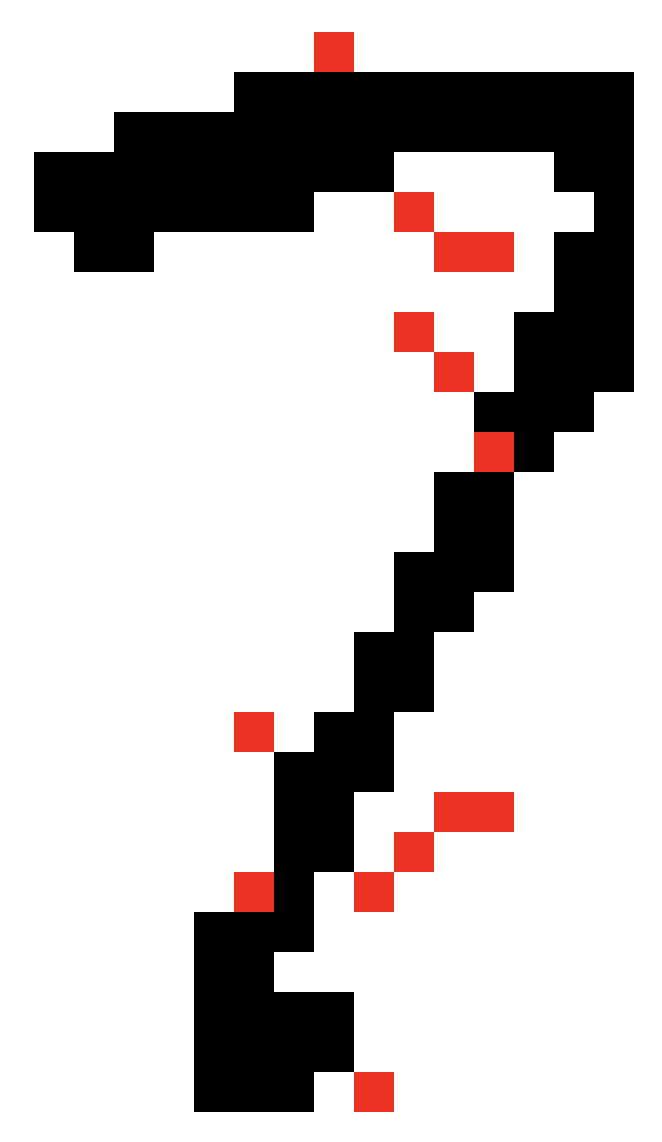}
	\hspace{3.5cm}
	\includegraphics[width=.13\linewidth]{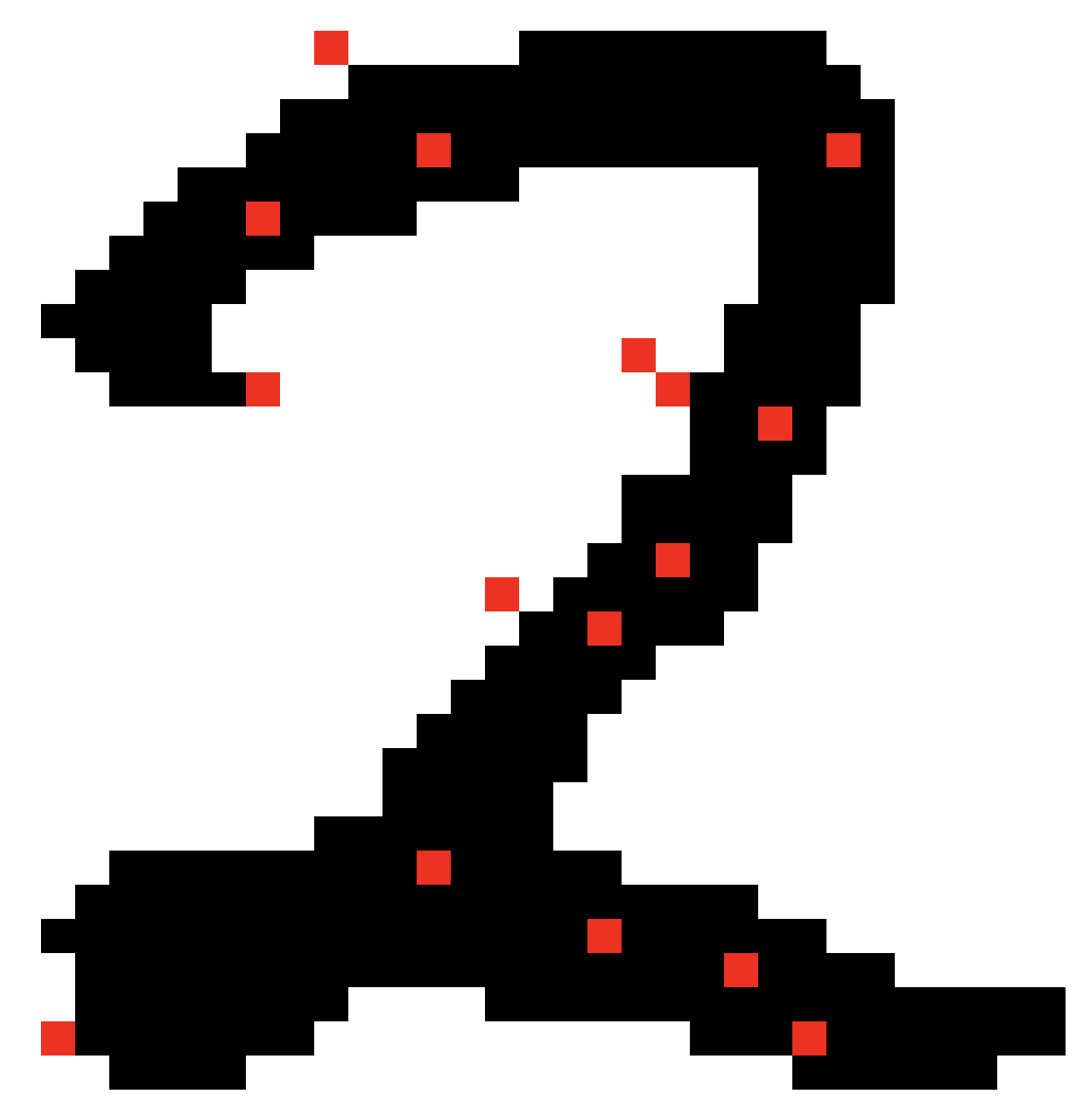}
	\includegraphics[width=.8\linewidth]{blank.png}
	\includegraphics[width=.2\linewidth]{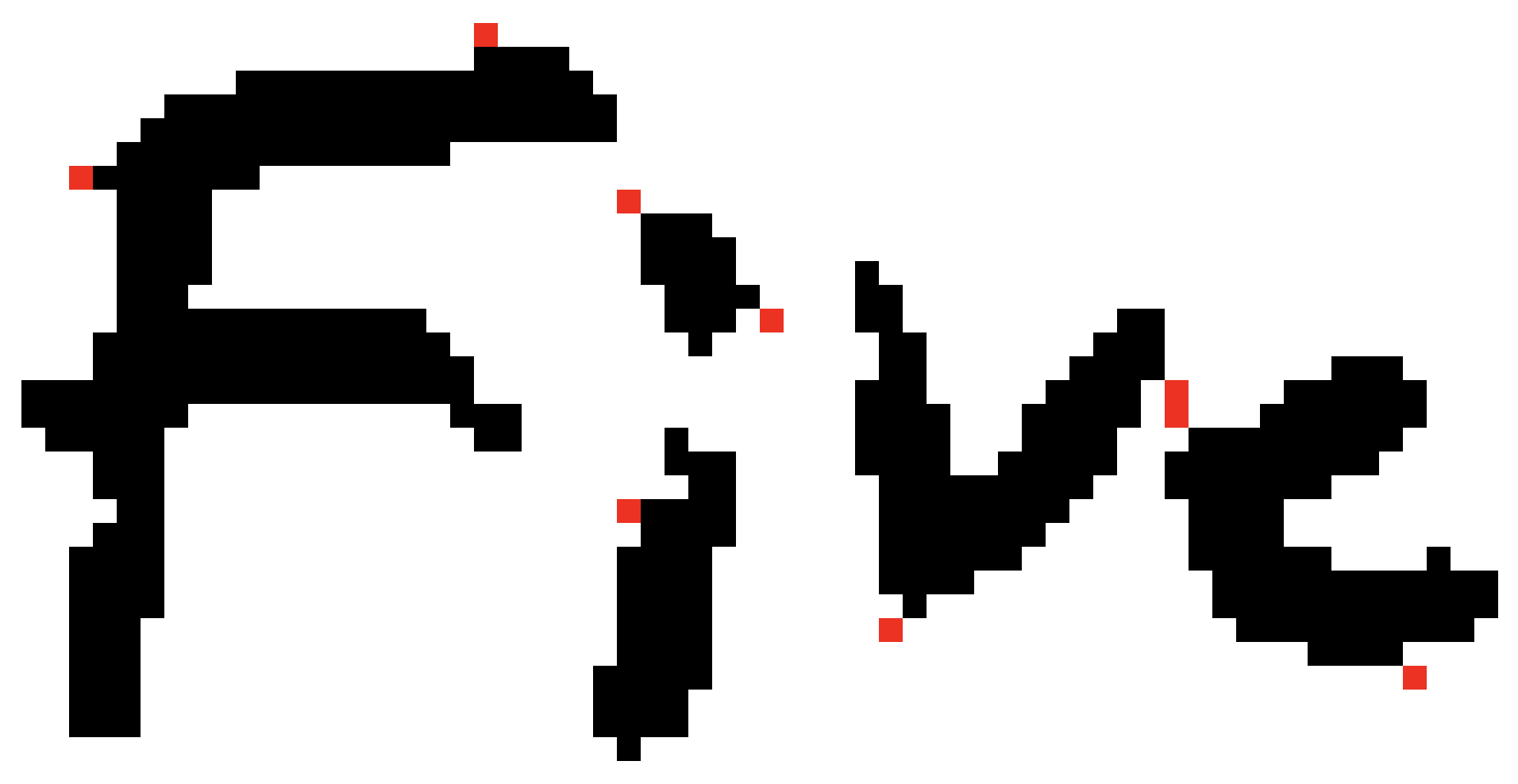}
	\hspace{1cm}
	\includegraphics[width=.3\linewidth]{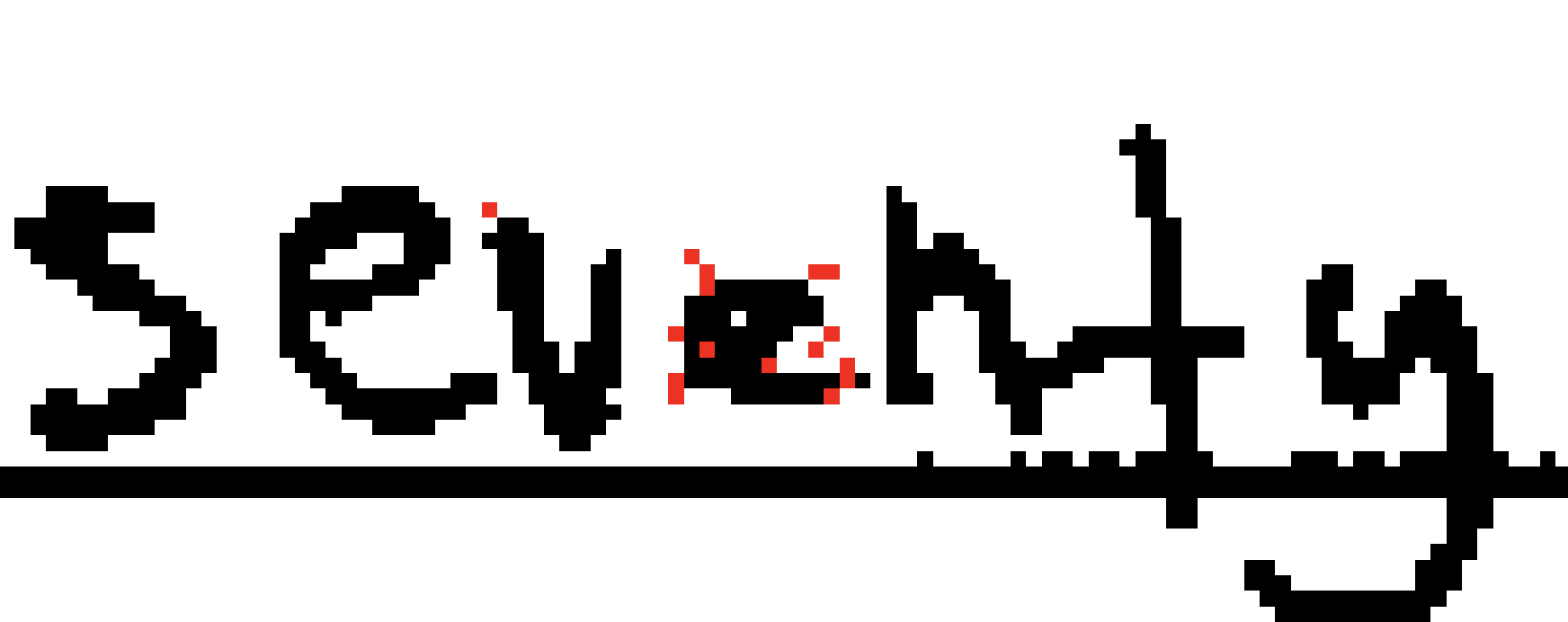}
	\hspace{1cm}
	\includegraphics[width=.25\linewidth]{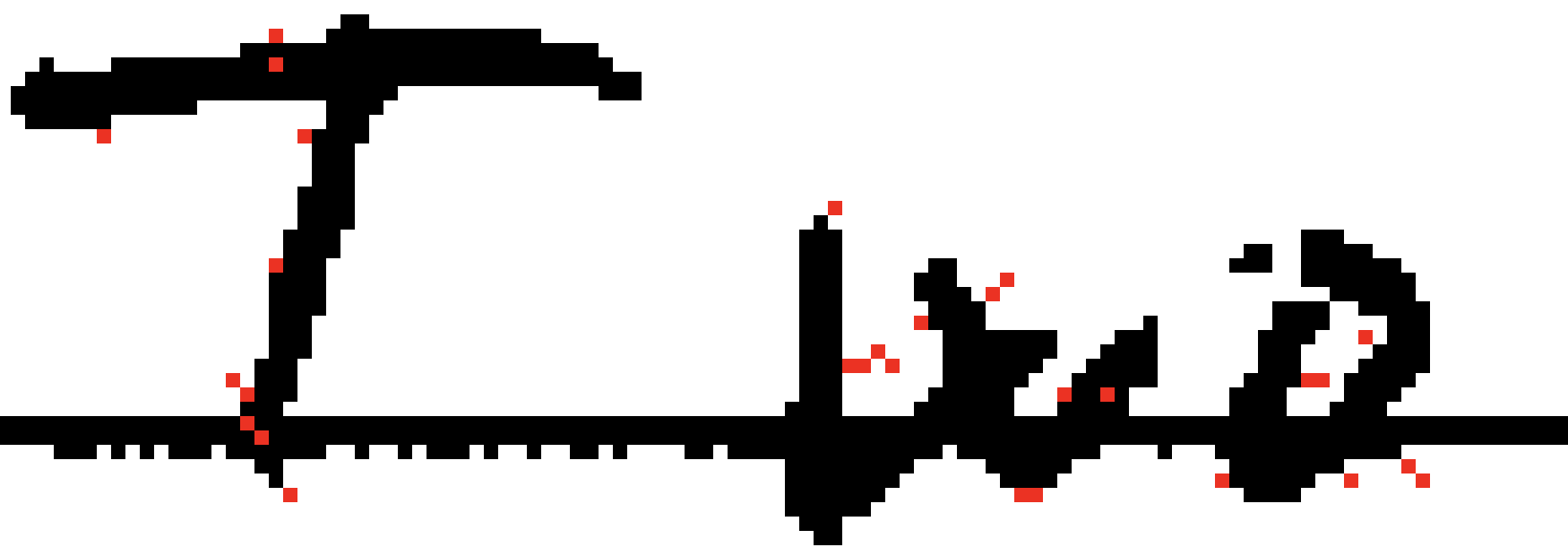}
	\label{fig:checks}
	\caption{First digit and word of the CAR and LAR amount of checks for $\$562$, $\$72$, and $\$2$ misclassified as $\$862$, $\$92$, and $\$3$ by a check processing system. The pixels in red correspond to pixels whose colors differ between the original and attacked image.}
\end{figure}

\end{document}